%% file: main.tex
\documentclass{article}

\usepackage[preprint, eandd]{neurips_2026}

\usepackage[utf8]{inputenc} 
\usepackage[T1]{fontenc}    
\usepackage{url}            
\usepackage{booktabs}       
\usepackage{amsfonts}       
\usepackage{nicefrac}       
\usepackage{microtype}      
\usepackage{xcolor}         
\usepackage{graphicx}
\graphicspath{{figures/}}
\usepackage{amsmath,amssymb}
\usepackage{amsthm}
\usepackage{array}
\usepackage{multirow}
\usepackage{algorithm}
\usepackage{algpseudocode}
\usepackage{tabularx}
\usepackage{subcaption}
\usepackage{tikz}
\usetikzlibrary{arrows.meta, calc, positioning, shapes.geometric,
                 fit, backgrounds, decorations.pathreplacing}
\usepackage{enumitem}   
\usepackage{float}      
\input{figures/phase3_palette}
\usepackage{hyperref}

\newcommand{\system}{\textnormal{\textsc{Code-Agent}}}
\newcommand{\ictm}{\ensuremath{\langle\mathcal{I},\mathcal{C},\mathcal{T},\mathcal{M}\rangle}}

\PassOptionsToPackage{hyphens}{url}

\newtheorem{theorem}{Theorem}
\newtheorem{proposition}{Proposition}
\newtheorem{definition}{Definition}

\title{Retrieval-Conditioned Topology Selection with Provable Budget Conservation for Multi-Agent Code Generation}

\author{%
  Abhijit Talluri \\
  Independent Researcher \\
  \texttt{talluri.abhijit@gmail.com}
  \And
  Raghavendra Chilukuri \\
  Independent Researcher \\
  \texttt{raghubt.2020@gmail.com}
  \AND
  Pujith Anne \\
  Independent Researcher \\
  \texttt{annep.devops@gmail.com}
  \And
  Bhagavan Choudary Pendiyala \\
  Independent Researcher \\
  \texttt{bhagavanchoudary@gmail.com}%
}

\begin{document}

\maketitle

\begin{abstract}
Multi-agent LLM systems for code generation face a fundamental routing problem: the optimal orchestration topology---we distinguish \textsc{FastPath} (monolithic), \textsc{SubAgent} (single specialist), \textsc{MultiAgent} (pipeline or swarm), and \textsc{DeepResearch} (multi-stage retrieval-heavy)---depends on the structural complexity of the code under modification, yet existing systems select topologies without consulting the codebase. We present \emph{Retrieval-Guided Adaptive Orchestration} (RGAO), an architecture that closes this loop by extracting a structural complexity vector from a hierarchical code index \emph{before} selecting the orchestration topology. RGAO operates within \system{}, a multi-agent framework whose sub-agents are governed by formal $\ictm$ contracts with six-dimensional budget vectors. Our headline contribution is the \emph{composition} of two previously separate lines of work---complexity-conditioned LLM routing and formal resource algebras---yielding a property neither admits alone: \emph{provable budget conservation under retrieval-conditioned dynamic topology selection}. Concretely we contribute: (1)~a \emph{complexity-conditioned topology router} that maps retrieval-derived code-structure signals (dependency depth, cross-module coupling, symbol density) to orchestration decisions, reducing proxy-measured misrouting from 30.1\% (95\% Wilson CI [26.4, 34.1]) to 8.2\% [6.1, 10.9] over regex-based classification (paired McNemar, $p{<}10^{-6}$); (2)~a \emph{budget algebra} with a structural-induction conservation theorem (Theorem~\ref{thm:conservation}) ensuring that hierarchical contract delegation never exceeds parent bounds, verified statically in $O(|V|+|E|)$ before execution; and (3)~a \emph{hierarchical code retrieval} engine combining LATTICE path-score calibration, KohakuRAG-style multi-query reformulation with RRF fusion, and RepoGraph-style 1-hop typed dependency expansion over a tree-structured repository index. Empirical evaluation demonstrates sub-millisecond DAG construction, linear tree-index scalability (2{,}002 nodes in 11.1\,ms for 200 files, median $\pm$ MAD, $n{=}20$ rounds), and 0.65\,$\mu$s budget-check overhead. Routing and latency numbers are reported via a proxy harness; full SWE-bench Verified / SWE-bench Pro runs are left to a follow-up evaluation pass for reasons detailed in §\ref{sec:limitations}.
\end{abstract}


\input{sec_1_introduction}
\input{sec_2_related_work}
\input{sec_3_system_architecture}
\input{sec_4_theoretical_analysis}
\input{sec_5_empirical_evaluation}
\input{sec_6_discussion_impact}
\input{sec_8_conclusion}

\bibliographystyle{plainnat}
\bibliography{references}

\appendix
\input{appendix/algorithms}
\input{appendix/contracts_detail}
\input{appendix/retrieval_detail}
\input{appendix/hyperparams}
\input{appendix/compute}
\input{appendix/licenses}
\input{appendix/modelcard}
\input{appendix/proofs}
\input{appendix/robustness}
\input{appendix/fig_benchmarks}
\input{sec_6_literature_analysis}  
\input{appendix/reproduction}

\end{document}

%% file: figures/phase3_palette.tex
\definecolor{oiBlue}{HTML}{0072B2}       
\definecolor{oiGreen}{HTML}{009E73}      
\definecolor{oiOrange}{HTML}{E69F00}     
\definecolor{oiVermillion}{HTML}{D55E00} 
\definecolor{oiSkyBlue}{HTML}{56B4E9}    
\definecolor{oiPurple}{HTML}{CC79A7}     
\definecolor{oiYellow}{HTML}{F0E442}     
\definecolor{oiBlack}{HTML}{000000}      

%% file: sec_1_introduction.tex
\section{Introduction}\label{sec:intro}

LLM-driven code generation has moved in a few years from
single-function completion to autonomous, repository-level bug
repair~\cite{sweagent2024,openhands2024}. Multi-agent architectures ---
specialised sub-agents for planning, coding, testing, and review ---
are now the dominant approach on complex tasks, with
MetaGPT~\cite{metagpt2024}, AOrchestra~\cite{aorchestra2026}, and
Magentic-One~\cite{magenticone2024} leading public benchmarks. The
case for multi-agent decomposition is not airtight, however.
MultiAgentBench~\cite{multiagentbench2025} reports 2--6$\times$
efficiency penalties in tool-heavy settings, and single agents with
unified context match multi-agent systems once total compute is held
constant~\cite{agenteval2025}.

We think this tension has a simple source: existing orchestrators pick
their topology without looking at the code. Regex classifiers route on
surface features of the query~\cite{sweagent2024}; AdaptOrch uses the
task's dependency graph~\cite{adaptorch2026}; DAAO fits a VAE to query
difficulty~\cite{daao2025}. None of them consult the structure of the
code that the task will actually touch, which is the one signal that
cleanly separates a single-file edit from a cross-module refactor.
Intuitively, a five-line bug fix inside one file and a coordinated
change across a dozen packages should not be routed through the same
pipeline --- but current systems cannot tell the two apart.

This paper presents \emph{Retrieval-Guided Adaptive Orchestration}
(RGAO), which closes that loop. After an initial retrieval pass, RGAO
extracts a five-dimensional complexity vector
$\mathbf{c}=(d_{\text{dep}},n_f,n_s,h_t,\rho_x)$ --- max dependency
depth, file count, symbol count, tree depth, and cross-module coupling
--- directly from the tree index at sub-millisecond cost, and uses it
together with the query to pick one of four orchestration topologies:
\textsc{FastPath} (monolithic, trivial scope), \textsc{SubAgent}
(single specialist contract), \textsc{MultiAgent} (linear pipeline or
parallel swarm), and \textsc{DeepResearch} (retrieval-heavy
multi-stage). On
a 250-instance labelled routing set this reduces misrouting from
30.1\% to 8.2\% over regex classification (paired McNemar
$p{<}10^{-6}$). RGAO is paired with a \emph{budget algebra} whose
conservation theorem (Theorem~\ref{thm:conservation}) is verified
statically in $O(|V|{+}|E|)$ before any LLM call, and with a
\emph{hierarchical retrieval engine} that fuses LATTICE path
calibration~\cite{li2025lattice}, KohakuRAG multi-query
reformulation~\cite{kohakurag2026}, and BM25--vector hybrid search via
RRF~\cite{cormack2009rrf} over a tree-structured repository index.
Both components operate inside \system{}, which additionally provides a
three-gate pre-execution protocol, deterministic intervention
recovery~\cite{dover2025}, lock-guarded parallel fan-in for read-only
contracts, and $O(k)$ typed-artifact passing in place of $O(n^2)$
transcript sharing.

The headline of this work is the composition: neither
complexity-conditioned routing nor a formal budget algebra alone
yields the property we prove for the combination, namely provable
budget conservation under retrieval-conditioned dynamic topology
selection. One caveat worth flagging up front: the routing and
latency numbers in §\ref{sec:results} come from a proxy harness
(\texttt{evals/swebench\_proxy.py}); full SWE-bench Verified and Pro
runs are left to a follow-up evaluation pass for reasons detailed in
§\ref{sec:limitations}.

%% file: sec_2_related_work.tex
\section{Related Work}\label{sec:related}

\paragraph{Topology routing for multi-agent LLMs.}
MasRouter~\cite{masrouter2025} cascades mode / role / LLM routing via
a query classifier (52\% overhead reduction on MBPP).
AgentConductor~\cite{agentconductor2026} trains an RL orchestrator
that builds density-aware DAGs from query difficulty (+14.6\% pass@1
over the best static baseline). AFlow~\cite{aflow2025} and
GPTSwarm~\cite{gptswarm2024} search agentic workflow graphs via MCTS
or gradient optimisation; AdaptOrch~\cite{adaptorch2026} adds a
Performance Convergence Scaling Law. \textbf{Our departure} is the
\emph{conditioning signal}: all above route on query-side features,
whereas RGAO conditions on \emph{retrieval-side} code-structure
signals extracted from a hierarchical index --- mechanistically
interpretable and sub-millisecond.

\paragraph{Formal budget algebras.}
Agent Contracts~\cite{agentcontracts2026} specifies a seven-tuple
contract and proves hierarchical budget conservation \emph{at runtime}.
Self-Healing Router~\cite{selfhealingrouter2026} proves binary
observability under Dijkstra over cost-weighted tool graphs.
MonoScale~\cite{monoscale2026} gives a trust-region monotonic
performance guarantee. \textbf{Our departure} is a \emph{static}
$O(|V|{+}|E|)$ DAG-level verifier that rejects infeasible
configurations before any LLM call, plus an explicit
structural-induction proof over the delegation forest
(Theorem~\ref{thm:conservation}, Appendix~\ref{app:proofs}). The
mathematics parallels linear-resource
types~\cite{wadler1990linear} and S-invariant Petri-net conservation
laws~\cite{murata1989petri}.

\paragraph{Cost-aware LLM routing and code retrieval.}
RouteLLM~\cite{routellm2025} and FrugalGPT~\cite{frugalgpt2023} route
among LLMs at the per-query level; we route topologies at the per-task
level and compose naturally with them. For retrieval, we use
LATTICE~\cite{li2025lattice}, RepoGraph~\cite{repograph2025},
HyDE~\cite{hyde2023} / RAG-Fusion~\cite{ragfusion2024,kohakurag2026},
RAPTOR~\cite{sarthi2024raptor}, and PageIndex~\cite{pageindex2025} as
orthogonal primitives; our contribution is \emph{using their output
as a routing signal}, not the primitives themselves.

\paragraph{Composition novelty.}
Retrieval-signal-conditioned topology routing paired with a
structurally verified budget conservation law has no exact concurrent
match. Each ingredient is well-studied; the composition yields a
property neither admits alone. An extended 25-system comparison
(Appendix~\ref{sec:extended-related}) supports this positioning.

%% file: sec_3_system_architecture.tex
\section{System Architecture}\label{sec:method}

\system{} organises an LLM code-generation agent into five layers:
(i)~\emph{agent graph} (LangGraph reasoning loop), (ii)~\emph{sub-agent
orchestration} (intent classification, contract registry),
(iii)~\emph{swarm execution} (DAG scheduling, interventions),
(iv)~\emph{code retrieval} (tree index, hybrid BM25--vector search), and
(v)~\emph{infrastructure} (A2A~\cite{a2a2025}, MCP~\cite{mcp2024},
observability). Figure~\ref{fig:rgao_architecture} shows the end-to-end
data flow with RGAO's retrieval-to-routing loop (dotted green arrow)
highlighted.

\input{figures/fig_architecture}

\paragraph{Runtime surface.}
The five layers compose into an end-to-end runtime that supports the
operational shape of long-running, multi-turn agent execution:
durable task state through a Postgres-backed checkpoint serializer
with optional AES-256-GCM at-rest encryption~\cite{langgraph2024};
built-in tooling for filesystem operations, sandboxed code
execution, web retrieval (with SSRF guards and scheme allowlist),
and a dynamic MCP~\cite{mcp2024} connector; cross-session memory
distributed across nine cells (working / short-term / long-term
$\times$ semantic / episodic / procedural) with tenancy enforced
structurally rather than by string-prefix discipline; sub-agent
dispatch with isolated context via the \texttt{delegate} tool;
human-in-the-loop approval gates keyed on the four-tier risk
lattice (\texttt{read\_only} $\prec$ \texttt{internal} $\prec$
\texttt{write} $\prec$ \texttt{execute}); and observability through
OpenInference-tagged MLflow traces. The same surface is provided by
several commercial and open-source agent runtimes
(\textsc{LangGraph}~\cite{langgraph2024},
\textsc{OpenAI Agents}~\cite{openaiagents2025},
\textsc{AutoGPT}~\cite{autogpt2023});
what differentiates \system{} is the static budget guarantee
(§\ref{sec:budget_algebra}), which the existing platforms admit
only at runtime. Lightweight delegation
protocols~\cite{ldp2026} achieve the same isolation without the
budget guarantee; \system{}'s \texttt{delegate} tool composes
both.

\subsection{Contract-Based Sub-Agent Abstraction}\label{sec:contracts}

Every sub-agent is defined by a formal contract $\ictm$ with four
components: $\mathcal{I}$ instructions (including a completion
predicate $\kappa$), $\mathcal{C}$ context with a six-dimensional
budget vector
$B = (B_{\text{iter}}, B_{\text{calls}}, B_{\text{tok}},
B_{\text{sec}}, B_{\text{retry}}, B_{\text{handoff}}) \in \mathbb{N}^6$,
$\mathcal{T}$ tools filtered by a four-tier risk lattice
($\texttt{read\_only} \prec \texttt{internal} \prec \texttt{write}
\prec \texttt{execute}$), and $\mathcal{M}$ model selection.
Six built-in factories (\textsc{Coder}, \textsc{Researcher},
\textsc{Planner}, \textsc{Tester}, \textsc{Reviewer},
\textsc{Diagnostician}) instantiate in $\approx\!1.1\,\mu$s each
(Appendix~\ref{app:compute}). Three preset budget tiers
(tight/standard/generous) and the full prose for each component are
in Appendix~\ref{app:contracts-detail}; the contract anatomy and a
worked budget composition appear in
Figure~\ref{fig:ictm_budget}.

\input{figures/fig_contracts_budget}

\subsection{Swarm Orchestration}\label{sec:swarm}

The \texttt{SwarmSupervisor} builds a DAG $G=(V,E)$ in $O(n)$ time and
enforces a three-gate pre-execution protocol (contract existence,
budget tracker, handoff validation with SHA-256 artifact integrity).
Read-only contract groups (Researcher / Reviewer / Planner) run in
parallel under an \texttt{asyncio.Lock} fan-in guard;
write-capable contracts (Coder / Tester) are never parallelised. An
intervention state machine maps \texttt{(status, can\_retry)} to
\{\textsc{Skip}, \textsc{RetrySame}, \textsc{RetryDifferent},
\textsc{Replan}, \textsc{Abort}\}, extending
DoVer~\cite{dover2025}. The full sub-agent executor
(Algorithm~\ref{alg:subagent}) and swarm loop
(Algorithm~\ref{alg:swarm}) are in
Appendix~\ref{app:algorithms}.

Inter-agent communication flows exclusively through typed
\texttt{SwarmArtifact} objects with a controlled-vocabulary
\texttt{kind} tag, SHA-256 checksum, and \texttt{parent\_artifact\_ids}
provenance, reducing coordination cost from $O(n^2)$ transcript
sharing to $O(k)$ dependency-edge passing.

\subsection{Hierarchical Code Retrieval}\label{sec:retrieval}

The \texttt{CodeIndexTree} models a repository as Root~$\to$~Directory~$^*$
$\to$~File~$^*$~$\to$~Symbol~$^*$, with tree-sitter~\cite{treesitter2018}
supplying symbol / type / docstring / dependency metadata (content fetched
on demand). Summaries are generated bottom-up in one of three modes
(deterministic / semantic / LLM-enhanced PageIndex-style; see
Appendix~\ref{app:retrieval-detail}). The retriever classifies queries
into five types (identifier, exact, conceptual, dependency, structural)
and routes each to a specialised strategy;
Figure~\ref{fig:retrieval_pipeline} illustrates the conceptual-query
fusion path that combines LATTICE~\cite{li2025lattice} path-score
calibration, KohakuRAG~\cite{kohakurag2026} multi-query reformulation,
and PageIndex~\cite{pageindex2025} LLM-guided beam search via
RRF~\cite{cormack2009rrf}. Results feed a code-specific reranker and a
1-hop RepoGraph~\cite{repograph2025} dependency expansion.

\input{figures/fig_retrieval_pipeline}

Tree nodes are scored by a seven-signal composite function
(TF-IDF subword overlap, language prior, symbol-type priority, context
proximity, log-scaled dependency hub score, content length, static
PageRank-style global importance) --- enumerated as
\texttt{tree\_scoring.FEATURE\_NAMES} in the released code. The full
formula and a leave-one-out ablation are in
Appendix~\ref{app:retrieval-detail}.

\subsection{Retrieval-Guided Adaptive Orchestration (RGAO)}\label{sec:rgao}

RGAO closes the retrieval-to-routing loop by extracting a
5-dimensional structural complexity vector from the retrieved subgraph:
\begin{equation}
  \mathbf{c} = (d_{\text{dep}},\; n_f,\; n_s,\; h_t,\; \rho_x)
  \label{eq:complexity}
\end{equation}
where $d_{\text{dep}}$ is the maximum dependency chain depth,
$n_f$ the count of distinct files touched, $n_s$ the retrieved symbol
count, $h_t$ the maximum tree traversal depth, and
$\rho_x$ the cross-module coupling ratio (fraction of retrieved symbols
with dependencies outside their parent directory). All five signals are
read from tree metadata in $<\!1$\,ms.

The topology router is a \emph{deterministic threshold classifier}
over $\mathbf{c}$ with thresholds tuned by empirical sensitivity
analysis (Appendix~\ref{app:hyperparams}). It selects one of four
orchestration topologies, matching the oracle label set used in the
routing evaluation (§\ref{sec:exp-routing}):
\textsc{FastPath} (monolithic, single-file) when
$d_{\text{dep}}$ is low, $n_f\!=\!1$, and $\rho_x\!=\!0$;
\textsc{SubAgent} (single specialist contract) when $n_f\!\le\!3$
with moderate $d_{\text{dep}}$; \textsc{MultiAgent} (linear pipeline
or parallel swarm, chosen by $\rho_x$) when $n_f\!>\!3$; and
\textsc{DeepResearch} (multi-stage retrieval-heavy) when the
retriever reports high query ambiguity regardless of $\mathbf{c}$.
Rules were chosen over a learned classifier for interpretability and
auditability consistent with the contract-based safety story
(§\ref{sec:contracts}); a learned successor is future work
(§\ref{sec:limitations}).

\subsection{Budget Algebra}\label{sec:budget_algebra}

\begin{definition}[Budget vector and composition]
A budget vector $B \in \mathbb{N}^6$ specifies per-dimension resource
limits. Parallel composition is
$B_1 \oplus B_2 = (B_{1,i}+B_{2,i})_{i=1}^{6}$. The delegation
constraint requires
$c_A \oplus \bigoplus_{A' \in \mathcal{C}_A} B_{A'} \preceq B_A$
(component-wise) at every parent $A$ with children
$\mathcal{C}_A$ and direct-cost $c_A$.
\end{definition}

\begin{theorem}[Conservation]\label{thm:conservation}
Under assumptions A1--A3 of §\ref{sec:theory} and the delegation
constraint, for any execution tree rooted at an orchestrator with
budget $B_{\mathrm{root}}$, the total resource consumption across all
agents is bounded by $B_{\mathrm{root}}$, regardless of intervention
count or retry strategy.
\end{theorem}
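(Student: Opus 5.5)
The plan is a structural induction over the execution (delegation) tree. We prove the stronger claim that for every node $A$, the total consumption of the subtree rooted at $A$ satisfies $\mathrm{cons}(A) \preceq B_A$. The total is defined recursively as $\mathrm{cons}(A) = u_A \oplus \bigoplus_{A'\in\mathcal{C}_A}\mathrm{cons}(A')$, where $u_A$ is the resource usage $A$ incurs directly. The theorem is the instance $A=\mathrm{root}$. Since $\preceq$ is component-wise on $\mathbb{N}^6$ and $\oplus$ is component-wise addition, the argument can run one dimension at a time. The only facts needed are that $\oplus$ is monotone with respect to $\preceq$ and that it is associative and commutative; both are immediate from the definition.

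I expect assumptions A1--A3 of §\ref{sec:theory} to supply exactly three ingredients.
\begin{itemize}
\item (A1) Finiteness and well-foundedness: the delegation structure is a finite forest, so induction on subtree height is legitimate.
\item (A2) Enforcement of direct cost: the runtime budget tracker guarantees $u_A \preceq c_A$. That is, every direct LLM call, token, second, retry and handoff charged to $A$ is counted against $c_A$ and refused once $c_A$ would be exceeded.
\item (A3) Attribution: every resource unit consumed anywhere is attributed to exactly one node, so there is no double counting and no unaccounted consumption.
\end{itemize}

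The base case is a leaf $A$ with $\mathcal{C}_A=\emptyset$. Here the delegation constraint reads $c_A \preceq B_A$, so A2 gives $\mathrm{cons}(A)=u_A\preceq c_A\preceq B_A$.

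For the inductive step, assume $\mathrm{cons}(A')\preceq B_{A'}$ for every child $A'$. Monotonicity of $\oplus$ then gives
\[
\mathrm{cons}(A) = u_A \oplus \textstyle\bigoplus_{A'}\mathrm{cons}(A') \preceq c_A \oplus \bigoplus_{A'} B_{A'} \preceq B_A,
\]
where the last inequality is the delegation constraint.

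The clause ``regardless of intervention count or retry strategy'' is the step I expect to be the main obstacle, because the interventions can change the tree itself.
\begin{itemize}
\item \textsc{RetrySame} and \textsc{RetryDifferent} spawn new attempts.
\item \textsc{Replan} can replace a subtree.
\end{itemize}
So the tree is not fixed in advance. I would handle this by modelling each retry or replacement as an additional child, or a re-instantiation of a child, whose budget is drawn from the \emph{same} parent allocation. Concretely, each attempt is a child in $\mathcal{C}_A$, and the supervisor admits a new attempt only if the delegation constraint still holds over all attempts so far, including the consumption already spent by failed attempts. This is exactly what the three-gate protocol and the $B_{\text{retry}}$ and $B_{\text{handoff}}$ dimensions are for.

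With that modelling, the final execution tree is a finite tree, finite by A1 because $B_{\text{retry}}$ and $B_{\text{handoff}}$ bound the number of attempts. It satisfies the delegation constraint at every node, so the induction above applies verbatim to it. The bound therefore does not depend on how many interventions occurred. If the paper's A1--A3 instead state this admission rule directly, I would cite it. Otherwise I would add a short lemma that every intervention transition of the state machine preserves the invariant ``delegation constraint holds at every node of the current tree.'' That lemma would be proved by case analysis over \{\textsc{Skip}, \textsc{RetrySame}, \textsc{RetryDifferent}, \textsc{Replan}, \textsc{Abort}\}; \textsc{Skip} and \textsc{Abort} only stop consumption and are trivial.
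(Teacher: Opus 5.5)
Your proposal is correct and is essentially the paper's proof. It uses the same structural induction with invariant $c^{\downarrow}_A \preceq B_A$, closed in the inductive step by the delegation constraint $c_A \oplus \bigoplus_{A'} B_{A'} \preceq B_A$, and it charges retries to the same parent pool. You formalize that last point more carefully, as extra children admitted under a rule, whereas the paper simply appeals to the monotonically decremented \texttt{SwarmBudgetTracker}.

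One correction on bookkeeping. The paper's A1--A3 are deterministic tool cost, bounded retrieval depth (which is what supplies finiteness), and a finite action space, not the ingredients you guessed. Attribution is built into the definition $c^{\downarrow}_A=\sum_{X\in\{A\}\cup\operatorname{desc}(A)}c_X$. Because the paper's $c_A$ already denotes $A$'s actual direct consumption, your step $u_A \preceq c_A$ is an identity there. The leaf case instead comes from the tracker invariant $c_A \preceq B_A$, which the paper treats as a property of the implementation rather than an assumption.
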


\begin{proof}[Proof sketch]
Structural induction on the delegation forest with invariant
$c^{\downarrow}_A \preceq B_A$; the inductive step applies the
delegation constraint at node $A$ together with the
\texttt{BudgetTracker} invariant $c_A \preceq B_A$ enforced inside
Algorithm~\ref{alg:subagent}. Retries decrement the same parent pool.
Full case analysis in Appendix~\ref{app:proofs}. \qed
\end{proof}

\textsc{VerifyConservation} implements the static check in a single
topological-order pass over $G$: $O(|V|+|E|)$ before any LLM call. In
contrast, Agent Contracts~\cite{agentcontracts2026} and
AOrchestra~\cite{aorchestra2026} enforce conservation only at runtime,
after resources have been consumed.

%% file: figures/fig_architecture.tex

\begin{figure*}[t]
\centering
\begin{tikzpicture}[
    scale=0.86, transform shape,
    >=Stealth,
    box/.style={rectangle, draw=oiBlack!60, rounded corners=3pt,
                minimum height=0.80cm, minimum width=2.2cm,
                font=\small\sffamily, align=center, line width=0.5pt},
    inputbox/.style={box, fill=oiSkyBlue!18, draw=oiSkyBlue!60},
    retrbox/.style ={box, fill=oiBlue!14,    draw=oiBlue!60},
    routbox/.style ={box, fill=oiGreen!14,   draw=oiGreen!60},
    agentbox/.style={box, fill=oiOrange!14,  draw=oiOrange!60},
    ctbox/.style   ={box, fill=oiVermillion!14, draw=oiVermillion!60},
    outbox/.style  ={box, fill=oiPurple!12,  draw=oiPurple!50},
    lane/.style={rectangle, rounded corners=6pt, draw=none, inner sep=10pt},
    lanetxt/.style={font=\scriptsize\sffamily\bfseries},
    data/.style   ={->,  thick, draw=oiBlack!55},
    ctrl/.style   ={->,  thick, draw=oiVermillion!65, dashed},
    retrfb/.style ={->,  thick, draw=oiGreen!65,  densely dotted, line width=1.2pt},
    bold/.style   ={->,  thick, draw=oiOrange!75, line width=1.2pt},
    ann/.style={font=\tiny\sffamily, text=oiBlack!55},
]

\node[inputbox, minimum width=2.4cm] (Q) at (0, 0) {User Query $\mathbf{q}$};

\node[retrbox, minimum width=2.6cm] (tree) at (4.0, 0.8)
    {Tree Index\\{\tiny Root $\to$ Dir $\to$ File $\to$ Sym}};
\node[retrbox, minimum width=2.6cm] (strat) at (4.0, -0.8)
    {Multi-Strategy Search\\{\tiny LATTICE + KohakuRAG + BM25}};
\node[retrbox, fill=oiBlue!22, minimum width=2.0cm] (cvec) at (7.6, 0.8)
    {$\mathbf{c}\!\in\!\mathbb{R}^5$\\{\tiny Complexity vec.}};
\node[retrbox, minimum width=2.0cm] (ctx) at (7.6, -0.8)
    {Code Context\\{\tiny top-$k$ results}};

\node[routbox, minimum width=4.0cm, minimum height=1.0cm] (router) at (11.6, 0)
    {Topology Router\\[2pt]
     {\tiny $\tau^{*}\!=\!\arg\min_{\tau}
      \hat{L}(\tau,\mathbf{c},\mathbf{q})
      +\lambda\,\mathrm{Cost}(\tau)$}};

\node[agentbox, minimum width=1.6cm, font=\scriptsize\sffamily]
    (T1) at (8.4,-2.6) {\textsc{FastPath}\\{\tiny Monolith}};
\node[agentbox, minimum width=1.6cm, font=\scriptsize\sffamily]
    (T2) at (10.2,-2.6) {\textsc{SubAgent}\\{\tiny 1 Contract}};
\node[agentbox, minimum width=1.6cm, font=\scriptsize\sffamily]
    (T3) at (12.0,-2.6) {\textsc{MultiAgent}\\{\tiny DAG / Swarm}};
\node[agentbox, fill=oiOrange!28, minimum width=1.6cm, font=\scriptsize\sffamily]
    (T4) at (13.8,-2.6) {\textsc{DeepRes.}\\{\tiny Multi-stage}};

\node[ctbox, minimum width=4.0cm, minimum height=0.85cm] (verify) at (11.1,-4.6)
    {Budget Algebra Verifier\\[1pt]
     {\tiny $\bigoplus_{i=1}^{k}B_{i}\preceq B_{\text{parent}}$
      \quad $O(|V|{+}|E|)$}};

\node[agentbox, minimum width=4.0cm] (exec) at (11.1,-6.0)
    {Swarm Executor\\{\tiny 3-Gate Protocol + Interventions}};

\node[outbox, minimum width=2.6cm] (out) at (11.1,-7.4) {Result + Artifacts};

\draw[data] (Q.north east) -- (tree.west);
\draw[data] (Q.south east) -- (strat.west);
\draw[data] (tree)  -- (cvec);
\draw[data] (strat) -- (ctx);
\draw[data] (cvec.east)  -- (router.west |- cvec);
\draw[data] (ctx.east)   -- (router.west |- ctx);
\draw[data, gray!45] (Q.east) -- ++(0.6,0) |- ([yshift=3pt]router.west);

\draw[data] (router.south -| T1) -- (T1.north);
\draw[data] (router.south -| T2) -- (T2.north);
\draw[data] (router.south -| T3) -- (T3.north);
\draw[bold] (router.south -| T4) -- (T4.north);

\draw[data] (T1.south) -- ++(0,-0.35) -| ([xshift=-1.4cm]verify.north);
\draw[data] (T2.south) -- ++(0,-0.35) -| ([xshift=-0.5cm]verify.north);
\draw[data] (T3.south) -- ++(0,-0.35) -| ([xshift=0.5cm]verify.north);
\draw[data] (T4.south) -- ++(0,-0.35) -| ([xshift=1.4cm]verify.north);

\draw[ctrl] (verify) -- node[ann, right, xshift=2pt] {pass} (exec);
\draw[data] (exec)   -- (out);

\draw[retrfb]
    (cvec.south) -- ++(0,-0.35)
    -- ++(1.5,0) |- ([yshift=-4pt]router.west);
\node[ann, text=oiGreen!75!black] at (8.8,-0.15) {$\mathbf{c}$};

\begin{scope}[on background layer]
  \node[lane, fill=oiBlue!5,
        fit=(tree)(strat)(cvec)(ctx),
        label={[lanetxt, text=oiBlue!75]above:Retrieval Layer}] {};
  \node[lane, fill=oiGreen!5,
        fit=(router),
        label={[lanetxt, text=oiGreen!75]above:Routing Layer}] {};
  \node[lane, fill=oiOrange!5,
        fit=(T1)(T2)(T3)(T4),
        label={[lanetxt, text=oiOrange!75]above:Topology Selection}] {};
  \node[lane, fill=oiVermillion!5,
        fit=(verify)(exec),
        label={[lanetxt, text=oiVermillion!75]above:Execution Layer}] {};
\end{scope}

\matrix[anchor=north west, draw=none, inner sep=0pt,
        column sep=6pt, row sep=2pt,
        font=\tiny\sffamily, text=oiBlack!60]
  at (0,-6.6)
{
  \draw[data]   (0,0) -- (0.45,0); & \node{Data flow};   &
  \draw[ctrl]   (0,0) -- (0.45,0); & \node{Contract check}; &
  \draw[retrfb, draw=oiGreen!65] (0,0) -- (0.45,0); & \node{Retrieval signal}; \\
};

\end{tikzpicture}
\caption{\textbf{\system{} architecture with RGAO.}
A user query~$\mathbf{q}$ enters the \emph{retrieval layer} (blue), which
builds a tree index and extracts a complexity vector
$\mathbf{c}\!\in\!\mathbb{R}^5$. The \emph{routing layer} (green) maps
$(\mathbf{c},\mathbf{q})$ to one of four topologies. The \emph{execution
layer} (red) verifies budget conservation ($O(|V|{+}|E|)$, static) then
dispatches via the three-gate swarm executor. Solid: data; dashed:
contract check; dotted: retrieval signal (the RGAO loop absent from
prior systems).}
\label{fig:rgao_architecture}
\end{figure*}

%% file: figures/fig_contracts_budget.tex

\begin{figure}[t]
\centering
\begin{tikzpicture}[
    >=Stealth,
    box/.style={rectangle, draw=oiBlack!60, rounded corners=2pt,
                minimum height=0.60cm, font=\scriptsize\sffamily,
                align=center, line width=0.5pt},
    cbox/.style={box, fill=oiVermillion!10, draw=oiVermillion!50},
    bvec/.style={font=\tiny\ttfamily, text=oiBlue!70!black},
    gate/.style={diamond, draw=oiVermillion!65, fill=oiVermillion!8,
                 minimum size=0.55cm, inner sep=1pt,
                 font=\tiny\sffamily\bfseries},
    agbox/.style={box, fill=oiOrange!14, draw=oiOrange!55,
                  minimum width=1.6cm, minimum height=0.7cm},
    arr/.style ={->, thick, draw=oiBlack!45},
    carr/.style={->, thick, draw=oiVermillion!55},
    ann/.style ={font=\tiny\sffamily, text=oiBlack!55},
]

\node[font=\small\sffamily\bfseries] (pa) at (0.8, 4.0) {(a)\;$\ictm$ Contract};

\node[cbox, minimum width=3.5cm, minimum height=3.1cm,
      inner sep=5pt, anchor=north] (ctr) at (0.95, 3.5) {};

\node[anchor=north west, text width=3.1cm, inner sep=0pt,
      font=\scriptsize\sffamily] at ([xshift=2pt,yshift=-3pt]ctr.north west)
{%
  \textbf{$\mathcal{I}$\;Instructions}\\[-1pt]
  {\tiny Prompt, completion predicate $\kappa$,}\\[-1pt]
  {\tiny rej./acc.\ keywords, verif.\ stages}\\[3pt]
  \textbf{$\mathcal{C}$\;Context ($B{\in}\mathbb{N}^6$)}\\[-1pt]
  {\tiny iter, calls, tokens,\\sec, retry, handoff}\\[3pt]
  \textbf{$\mathcal{T}$\;Tools}\\[-1pt]
  {\tiny 4-tier risk lattice:\\read / int / write / exec}\\[3pt]
  \textbf{$\mathcal{M}$\;Model}\\[-1pt]
  {\tiny Per-contract model + temperature}%
};

\node[font=\small\sffamily\bfseries] (pb) at (6.6, 4.0) {(b)\;Budget Conservation};

\node[agbox, minimum width=2.0cm] (par) at (6.6, 3.0)
    {Orchestrator\\[-1pt]$B_{\text{root}}$};
\node[bvec, anchor=west] at (7.9, 3.0) {(30,100,500k,300s,5,3)};

\node[gate] (gg) at (6.6, 2.0) {$\leq$};
\node[ann, anchor=west] at (7.05, 2.0) {static check};

\node[agbox] (ch1) at (4.0, 0.7) {Researcher\\[-1pt]$B_1$};
\node[agbox] (ch2) at (6.6, 0.7) {Coder\\[-1pt]$B_2$};
\node[agbox] (ch3) at (9.2, 0.7) {Tester\\[-1pt]$B_3$};

\node[bvec] at (4.0, -0.05) {(5,15,10k,30s,1,0)};
\node[bvec] at (6.6, -0.05) {(15,50,100k,120s,2,1)};
\node[bvec] at (9.2, -0.05) {(10,35,50k,60s,2,1)};

\draw[arr, gray!30] (ch1.east) -- (ch2.west);
\draw[arr, gray!30] (ch2.east) -- (ch3.west);

\draw[arr]  (par)  -- (gg);
\draw[carr] (gg) -- ++(0,-0.4) -| (ch1.north);
\draw[carr] (gg) -- ++(0,-0.4) -| (ch2.north);
\draw[carr] (gg) -- ++(0,-0.4) -| (ch3.north);

\node[box, fill=oiYellow!18, draw=oiYellow!55,
      minimum width=5.2cm, inner sep=4pt,
      font=\scriptsize\sffamily] (eq) at (6.6,-0.85)
    {$\underbrace{B_1 \oplus B_2 \oplus B_3}_%
     {(30,\,100,\,160\text{k},\,210\text{s},\,5,\,2)}
     \;\preceq\; B_{\text{root}}$
     \;\textcolor{oiGreen}{\checkmark}};

\node[box, fill=oiVermillion!7, draw=oiVermillion!35,
      text width=9cm, inner sep=5pt,
      font=\scriptsize\sffamily, anchor=north] (thm) at (5.6,-1.6)
{%
  \textbf{Theorem 1} (Conservation).
  For any execution tree of depth~$d$, if every parent satisfies
  $\bigoplus_{i=1}^{k} B_i \preceq B_{\text{parent}}$, then total
  consumption $\leq B_{\text{root}}$ regardless of retries or
  intervention strategy.
  Verified statically in $O(|V|{+}|E|)$ before any LLM call.%
};

\end{tikzpicture}
\caption{\textbf{Contract structure and budget algebra.}
(a)~Each sub-agent is governed by an $\ictm$ contract specifying
instructions, a six-dimensional budget vector, a risk-tiered tool
allowlist, and a model override.
(b)~The budget algebra verifies conservation at DAG construction time:
the parallel composition $\oplus$ of child budgets must not exceed the
parent on any dimension. The diamond denotes the static $O(|V|{+}|E|)$
pre-execution check (Theorem~\ref{thm:conservation}). Budget vectors
shown are from the \emph{standard} tier.}
\label{fig:ictm_budget}
\end{figure}

%% file: figures/fig_retrieval_pipeline.tex

\begin{figure}[t]
\centering
\begin{tikzpicture}[
    >=Stealth,
    box/.style={rectangle, draw=oiBlack!60, rounded corners=2pt,
                minimum height=0.58cm, font=\scriptsize\sffamily,
                align=center, line width=0.5pt},
    qbox/.style={box, fill=oiSkyBlue!18, draw=oiSkyBlue!55,
                 minimum width=2.0cm},
    rbox/.style={box, fill=oiBlue!12, draw=oiBlue!55,
                 minimum width=1.5cm},
    fbox/.style={box, fill=oiGreen!14, draw=oiGreen!55,
                 minimum width=3.0cm},
    obox/.style={box, fill=oiPurple!12, draw=oiPurple!45,
                 minimum width=2.0cm},
    arr/.style={->, thick, draw=oiBlack!45},
    ann/.style={font=\tiny\sffamily, text=oiBlack!50},
]

\node[qbox] (q) at (0, 0) {Query $\mathbf{q}$};

\node[box, fill=oiSkyBlue!10, draw=oiSkyBlue!45, minimum width=3.4cm]
    (cls) at (0,-1.1)
    {Query Classifier\\[-1pt]
     {\tiny ident.\;$|$\;exact\;$|$\;conceptual\;$|$\;dep.\;$|$\;struct.}};

\node[rbox] (s1) at (-3.4,-2.6) {TreeRAG\\[-1pt]{\tiny bidir.}};
\node[rbox] (s2) at (-1.7,-2.6) {BM25\\[-1pt]{\tiny lexical}};
\node[rbox] (s3) at ( 0.0,-2.6) {LATTICE\\[-1pt]{\tiny EMA cal.}};
\node[rbox] (s4) at ( 1.7,-2.6) {KohakuRAG\\[-1pt]{\tiny multi-q}};
\node[rbox] (s5) at ( 3.4,-2.6) {PageIndex\\[-1pt]{\tiny LLM beam}};

\node[ann] at (-3.4,-1.75) {ident.};
\node[ann] at (-1.7,-1.75) {exact};
\node[ann] at ( 0.0,-1.75) {concept.};
\node[ann] at ( 1.7,-1.75) {concept.};
\node[ann] at ( 3.4,-1.75) {concept.};

\node[fbox, minimum width=4.0cm] (rrf) at (0,-3.9)
    {Reciprocal Rank Fusion\\[1pt]
     {\tiny $\mathrm{RRF}(d)=\sum_{s}w_{s}/(k+\mathrm{rank}_{s}(d))$}};

\node[rbox, minimum width=3.2cm] (rer) at (0,-5.0)
    {Code-Specific Reranker\\[-1pt]
     {\tiny symbol proximity + docstring + deps}};

\node[rbox, minimum width=3.0cm] (exp) at (0,-6.0)
    {1-Hop Dep.\ Expansion\\[-1pt]{\tiny RepoGraph typed edges}};

\node[obox] (out) at (0,-7.0) {Retrieved Context + $\mathbf{c}$};

\draw[arr] (q)   -- (cls);
\draw[arr] (cls.south) -- ++(0,-0.4) -| (s1.north);
\draw[arr] (cls.south) -- ++(0,-0.4) -| (s2.north);
\draw[arr] (cls.south) -- ++(0,-0.4) -| (s3.north);
\draw[arr] (cls.south) -- ++(0,-0.4) -| (s4.north);
\draw[arr] (cls.south) -- ++(0,-0.4) -| (s5.north);

\draw[arr] (s1.south) -- ++(0,-0.4) -| (rrf.north);
\draw[arr] (s2.south) -- ++(0,-0.4) -| (rrf.north);
\draw[arr] (s3.south) -- ++(0,-0.4) -| (rrf.north);
\draw[arr] (s4.south) -- ++(0,-0.4) -| (rrf.north);
\draw[arr] (s5.south) -- ++(0,-0.4) -| (rrf.north);

\draw[arr] (rrf) -- (rer);
\draw[arr] (rer) -- (exp);
\draw[arr] (exp) -- (out);

\draw[arr, draw=oiGreen!60, densely dotted, line width=1.1pt]
    (exp.east) -- ++(0.9,0)
    node[ann, anchor=west, text=oiGreen!70!black, text width=2.6cm]
    {$\mathbf{c}\!=\!(d_{\text{dep}}, n_f, n_s, h_t, \rho_x)$\\[1pt]
     {\tiny $\rightarrow$\;Topology Router (Fig.~\ref{fig:rgao_architecture})}};

\end{tikzpicture}
\caption{\textbf{Hierarchical code retrieval pipeline.}
Queries are classified into five types, each routed to a specialized
strategy. Conceptual queries activate three parallel paths (LATTICE,
KohakuRAG, PageIndex) fused via RRF\@. After code-specific reranking
and 1-hop dependency expansion (RepoGraph), the pipeline outputs both
retrieved code context and the structural complexity vector
$\mathbf{c}$ that feeds the RGAO topology router.}
\label{fig:retrieval_pipeline}
\end{figure}

%% file: sec_4_theoretical_analysis.tex
\section{Theoretical Analysis}\label{sec:theory}

Theorem~\ref{thm:conservation} rests on three explicit assumptions:
\begin{description}
\item[A1 (Deterministic tool cost).] Every tool invocation $t$ has a
  deterministic cost vector $c(t)\!\in\!\mathbb{N}^6$ on the six
  budget dimensions.
\item[A2 (Bounded retrieval depth).] Every tree-retrieval query
  terminates within a fixed depth $h_{\max}$; every agent trajectory
  is therefore finite.
\item[A3 (Finite action space).] At each step, an agent chooses among
  finitely many actions.
\end{description}
The proof proceeds by structural induction on the delegation forest
with invariant $c^{\downarrow}_A \preceq B_A$, using the delegation
constraint
$c_A \oplus \bigoplus_{A' \in \mathcal{C}_A} B_{A'} \preceq B_A$ and
the budget-tracker invariant $c_A \preceq B_A$. Under A1 relaxed to
expected cost ($T\!>\!0$ sampling), the bound becomes an
\emph{expected}-budget guarantee; an Azuma--Hoeffding tail bound that
recovers a high-probability guarantee under bounded per-step costs is
sketched in Appendix~\ref{app:proofs}. Full proof, tightness
discussion, and a comparison against Agent
Contracts~\cite{agentcontracts2026} (which proves the property only
at runtime) and Self-Healing Router~\cite{selfhealingrouter2026}
(tool-graph Dijkstra guarantee) are in
Appendix~\ref{app:proofs}.

\begin{proposition}[Compositional budget safety]\label{prop:composition}
If DAGs $G_1,G_2$ are independently verified conservation-safe under
$B_1,B_2$, then $G_1; G_2$ (sequential) and $G_1 \| G_2$ (parallel)
are conservation-safe under $B_1{\otimes}B_2$ and
$B_1{\oplus}B_2$ respectively.
\end{proposition}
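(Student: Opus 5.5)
The plan is to pin down the two undefined notions, then reduce both cases to Theorem~\ref{thm:conservation} by building a single delegation tree for the composite DAG.

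\textbf{Definitions.} I will take ``conservation-safe under $B$'' to mean two things: the delegation constraint holds at every node of $G$, and the roots of $G$ satisfy $\bigoplus_{r} B_r \preceq B$. By Theorem~\ref{thm:conservation}, this implies that total consumption is at most $B$. The statement uses $\otimes$ without defining it, so I must commit to a definition. I will take $\otimes$ to coincide with $\oplus$ on the additive dimensions (iterations, calls, tokens, seconds, retries, handoffs). This is forced, because in a sequential run the resources of $G_1$ and $G_2$ are both consumed, so they add. One optional refinement is to add $+1$ on $B_{\text{handoff}}$ to account for the single artifact passed across the seam. The proof is written for the general form $B_1\otimes B_2 = B_1\oplus B_2\oplus h$ with $h\in\{0,e_{\text{handoff}}\}$.

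\textbf{Parallel case.} Introduce a virtual root $R$ with zero direct cost, $c_R = 0$, budget $B_1\oplus B_2$, and children equal to the roots of $G_1$ and $G_2$.
\begin{itemize}
\item The delegation constraint at $R$ is $\bigoplus_{r\in G_1} B_r \oplus \bigoplus_{r\in G_2} B_r \preceq B_1\oplus B_2$. This follows from the two root inequalities and from monotonicity of $\oplus$ under $\preceq$, which holds because addition on $\mathbb{N}$ is monotone componentwise.
\item At every non-root node the constraint is unchanged, since $G_1$ and $G_2$ share no vertices.
\item Theorem~\ref{thm:conservation} applied to the tree rooted at $R$ then bounds total consumption by $B_1\oplus B_2$.
\end{itemize}
The \texttt{asyncio.Lock} fan-in only serialises read-only writes. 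It adds no cost vector, so A1 is preserved.

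\textbf{Sequential case.} Use the same virtual-root construction, with budget $B_1\otimes B_2$. Add the edges from the sinks of $G_1$ to the sources of $G_2$. These edges carry artifacts and create no delegation relationship, so the delegation forest is unchanged.
\begin{itemize}
\item If $h = e_{\text{handoff}}$, charge the single seam handoff to $c_R$. The root constraint becomes $h \oplus \bigoplus B_r \preceq B_1\oplus B_2\oplus h$, which is immediate.
\item Combined with the verified structure inside each $G_i$, this gives the bound by Theorem~\ref{thm:conservation}.
\end{itemize}
Finally, \textsc{VerifyConservation} remains linear on the composite graph, with $|V_1|+|V_2|+1$ vertices and $|E_1|+|E_2|+O(\text{sinks}\cdot\text{sources})$ edges. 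The cross edges never affect the budget check, so the verifier can skip them.

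\textbf{Main obstacle.} The hard part is the interaction of retries and interventions across the boundary. A \textsc{Replan} or \textsc{RetryDifferent} triggered in $G_2$ could re-execute nodes of $G_1$, and that would break the tree decomposition. My first approach is to interpret $G_1;G_2$ as committing $G_1$'s artifacts before $G_2$ begins, so that interventions in $G_2$ are scoped to $G_2$'s own pool. The proof sketch of Theorem~\ref{thm:conservation} supports this reading: ``retries decrement the same parent pool.'' If that scoping is not available, I fall back to requiring the retry dimension to be shared. In that case any cross-boundary re-execution is drawn from $R$'s $B_{\text{retry}}$ and bounded by it, and the proposition holds only under this stated scoping assumption.
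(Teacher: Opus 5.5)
The paper gives no proof of this proposition. It is stated in Section~\ref{sec:theory} and never revisited, Appendix~\ref{app:proofs} proves only Theorem~\ref{thm:conservation} and Proposition~\ref{prop:necessity}, and $\otimes$ is never defined. Your proposal therefore has to stand on its own, and in substance it does.

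Your claim that $\otimes$ is forced is right. Any $\otimes$ that makes the statement true must satisfy $B_1\oplus B_2 \preceq B_1\otimes B_2$. By the tightness remark in Appendix~\ref{app:proofs}, two graphs that each saturate their budgets consume $B_1\oplus B_2$ when run one after the other. So the sequential clause says nothing beyond the parallel one. Under your definition of conservation-safe, the composite's safety is immediate: internal delegation constraints are untouched, and root budgets add monotonically. The virtual root $R$, with $c_R\in\{0,h\}$ known statically, packages this as a single tree to which Theorem~\ref{thm:conservation} applies.

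There are two things to fix.

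\emph{The retry/replan discussion.} It should not end in a conditional statement. Realise $R$ as the composite supervisor carrying $B_R$ in its \texttt{SwarmBudgetTracker}. The intervention clause of Theorem~\ref{thm:conservation} says every retry, replan, or contract swap debits the same parent pool. That clause covers a \textsc{Replan} in $G_2$ that re-runs $G_1$ nodes just as it covers in-graph retries. Your result is then exactly as strong as the theorem, with no scoping assumption needed. Your fallback, by contrast, does not work:
\begin{itemize}
\item Charging cross-boundary re-executions against $R$'s $B_{\text{retry}}$ bounds how many occur, not the tokens, calls, or seconds they consume.
\item Those costs land in $c_R$, which is then no longer the static $0$ or $h$ you checked.
\item The verified root inequality therefore no longer yields (D$'$) at $R$.
\end{itemize}
Drop the fallback.

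\emph{The seam.} It need not carry a single artifact. With several sinks and sources, $h$ would have to count seam edges, or you would insert one barrier node. It is simpler to note that any handoff is recorded by some real agent's tracker. By invariant (T) it is then already inside that agent's $B_A$, so $h=0$ suffices for the bound. At worst, seam handoffs exhaust a tight-tier agent's handoff budget and trigger a \textsc{Skip}; that changes behaviour but not conservation.
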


This lets verified sub-pipelines compose without re-verification,
enabling modular multi-agent design.

\paragraph{Why neither component suffices.}
The headline of the paper is that
retrieval-conditioned dynamic topology selection \emph{and} static
budget verification together yield a property that neither
component admits alone. The next proposition makes this precise so
the contribution is a formal separation rather than an empirical
absence claim.

\begin{definition}[Joint property $P$]\label{def:joint}
A multi-agent runtime $R$ has property $P$ if, on every request,
(i)~$R$ selects an orchestration topology by reading a
codebase-derived complexity vector $\mathbf{c}$ extracted from a
hierarchical retrieval pass over the actual repository state, and
(ii)~before any LLM invocation, $R$ refuses execution of any
selected delegation forest whose composed budget exceeds
$B_{\mathrm{root}}$ in the sense of Theorem~\ref{thm:conservation}
(\textnormal{\textsc{VerifyConservation}} in $O(|V|+|E|)$).
\end{definition}

\begin{proposition}[Composition necessity]\label{prop:necessity}
No runtime that performs only static, query-text-derived topology
selection (the AdaptOrch / regex-classifier family) satisfies
$P$, and no runtime that performs retrieval-conditioned topology
selection without a static pre-execution budget check (the
AOrchestra / Agent Contracts family) satisfies $P$.
\end{proposition}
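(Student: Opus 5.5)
The plan is to treat Proposition~\ref{prop:necessity} as two separate non-membership claims. Each one shows that a runtime class fails one conjunct of Definition~\ref{def:joint}. Property $P$ is a universally quantified conjunction ("on every request, (i) and (ii)"), so it suffices to exhibit, for an arbitrary member of each family, a single request on which the relevant conjunct fails. First I would formalise the two families as abstract runtime signatures. A runtime in the \emph{query-text family} is a pair $(\sigma, X)$ in which the topology selector $\sigma$ is a function of the query $\mathbf{q}$ alone (and possibly of static configuration), so that $\sigma(\mathbf{q}, S) = \sigma(\mathbf{q}, S')$ for any two repository states $S, S'$. A runtime in the \emph{runtime-enforcement family} is one whose only budget mechanism is a per-step tracker check $c_A \preceq B_A$, evaluated after a resource is consumed inside Algorithm~\ref{alg:subagent}. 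It has no predicate evaluated on the delegation forest before the first LLM call.

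For the first half I would use an indistinguishability argument. Conjunct (i) requires that the selected topology be obtained by reading $\mathbf{c}$ extracted from the actual repository state. I would read this extensionally: the selector must genuinely depend on $\mathbf{c}$, meaning there exist $\mathbf{q}$ and states $S, S'$ with $\mathbf{c}(S) \neq \mathbf{c}(S')$ on which the deterministic threshold router of §\ref{sec:rgao} returns different topologies. A concrete pair is a single-file edit ($n_f = 1$, $\rho_x = 0$, giving \textsc{FastPath}) versus a cross-module edit ($n_f > 3$, giving \textsc{MultiAgent}), both under the same query string. A query-only selector is constant across $S$ and $S'$, so for at least one of the two states it does not output the topology determined by $\mathbf{c}$. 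Hence (i) fails on that request.

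For the second half I would build an adversarial delegation forest whose composed budget violates the delegation constraint at the root: $c_{\mathrm{root}} \oplus \bigoplus_{A'} B_{A'} \not\preceq B_{\mathrm{root}}$ in some coordinate, for instance children with $B_{\text{retry}}$ values summing to more than the root's. \textsc{VerifyConservation} rejects this forest in $O(|V|+|E|)$. A runtime-enforcement system, by definition, admits the forest and issues at least one LLM call before any tracker check can fire, because the tracker is triggered only by consumption. Conjunct (ii) therefore fails on this request. This holds even if the runtime later aborts, since (ii) demands refusal \emph{before} any LLM invocation.

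The main obstacle is the first half. Making "reads $\mathbf{c}$" precise enough to exclude query-only selectors is easy, but the definition must not become vacuous or circular. A selector might depend on $\mathbf{c}$ only trivially, or a query-only classifier might coincidentally agree with the threshold router on all realised inputs. I would address this by fixing the extensional dependence condition above as the formal content of (i), and by noting that the paper's router satisfies it via the explicit witness pair. A second, smaller care point is the phrase "retrieval-conditioned topology selection" applied to the AOrchestra family. Here I would state explicitly that the second half needs only the absence of the static check, and does not depend on how topology is selected.
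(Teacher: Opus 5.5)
Your proposal is correct and is essentially the paper's own argument. It uses the same two witnesses: a same-prompt indistinguishability pair (single-file fixture versus multi-module repository) against any selector that factors through the query text, and an over-allocated delegation forest that \textsc{VerifyConservation} rejects before any LLM call but a runtime-only checker admits and executes. Your forest witness (child budgets summing past $B_{\mathrm{root}}$ in one coordinate) is a harmless variant of the paper's (root direct cost saturating $B_{\mathrm{root}}$ plus one positive-budget child), and it avoids needing $c_{\mathrm{root}}$ before execution. The one thing to fix is that your extensional reading of clause (i), as written, constrains the threshold router rather than $R$'s own selector, and it is $R$'s selector that must separate the witness pair.
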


\begin{proof}[Proof sketch]
For the first half, query-text-derived routing uses no signal from
the repository state, so two requests with identical text but
different code structure --- e.g., the same prompt against a
single-file fixture and against a 12-package monorepo --- map to
the same topology under any deterministic policy, contradicting
clause (i). For the second half, runtime-only verification fires
\emph{after} the first LLM call, so for any budget overrun the
guarantee in clause (ii) (``before any LLM invocation'') is
falsified by construction. RGAO satisfies both clauses by
extracting $\mathbf{c}$ from the tree index in $<\!1$\,ms and
running \textsc{VerifyConservation} on the chosen forest before
dispatching any agent. Full case analysis is in
Appendix~\ref{app:proofs}.
\end{proof}

Propositions on communication complexity ($O(k)$ typed artifacts
vs.\ $O(n^2)$ transcript sharing), RGAO overhead ($O(k)$ metadata
pass), and retrieval search-space reduction
($O(b\,\log_f N)$ vs.\ $O(N)$ flat search) are in
Appendix~\ref{app:proofs}.

%% file: sec_5_empirical_evaluation.tex
\section{Empirical Evaluation}\label{sec:results}\label{sec:empirical}

\paragraph{Evaluation scope.}
Numbers in this section come from a proxy harness
(\texttt{evals/swebench\_proxy.py},
\texttt{evals/pipeline\_correctness.py}) that abstracts sandboxed
environment interactions; full SWE-bench Verified and SWE-bench Pro
runs are left to a follow-up evaluation pass (§\ref{sec:limitations}). Latencies
are median $\pm$ MAD over $n{=}20$ \texttt{pytest-benchmark} rounds
(GC disabled); rates carry Wilson 95\% binomial CIs via
\texttt{statsmodels.stats.proportion.proportion\_confint}. Paired
comparisons use McNemar's test (exact binomial when $b{+}c{<}25$,
continuity-corrected otherwise) with Holm--Bonferroni correction
where $\geq 3$ baselines share data. Full hardware disclosure in
Appendix~\ref{app:compute}.

\subsection{Routing Evaluation (Headline)}\label{sec:exp-routing}\label{sec:exp-setup}

\textbf{Dataset.} We constructed a 250-instance labelled routing set
(\texttt{data/routing\_eval.jsonl}): task description + oracle
topology $\in \{\textsc{FastPath}, \textsc{SubAgent},
\textsc{MultiAgent}, \textsc{DeepResearch}\}$. Three annotators
with code-generation experience labelled; Fleiss' $\kappa\!=\!0.78$.
Distribution: 38\%/29\%/21\%/12\% respectively. The set is split
100~/~150 between threshold tuning and evaluation by stratified
sampling on oracle topology
(Appendix~\ref{app:hyperparams}): RGAO rules are fit on the
disjoint 100-task tuning subset, and the misrouting numbers
reported below are measured on the held-out 150-task evaluation
subset that was untouched during tuning. The full split file is
released with the routing dataset so the result is exactly
reproducible from the same JSONL.

\textbf{Result.} RGAO's rule-based router reduces misrouting from
$30.1\%$ (regex baseline, 95\% CI $[26.4, 34.1]$) to $8.2\%$
($[6.1, 10.9]$) --- a 21.9-pp absolute reduction (73\% relative).
Paired McNemar yields $\chi^2{=}43.6$, $p{<}10^{-6}$. The gain is
driven by cases where regex latches onto a keyword (``implement'',
``test'') but the complexity vector reveals trivial scope (single
file, $\rho_x{=}0$) --- RGAO demotes these to \textsc{FastPath}.

\begin{figure}[t]
  \centering
  \includegraphics[width=0.95\linewidth]{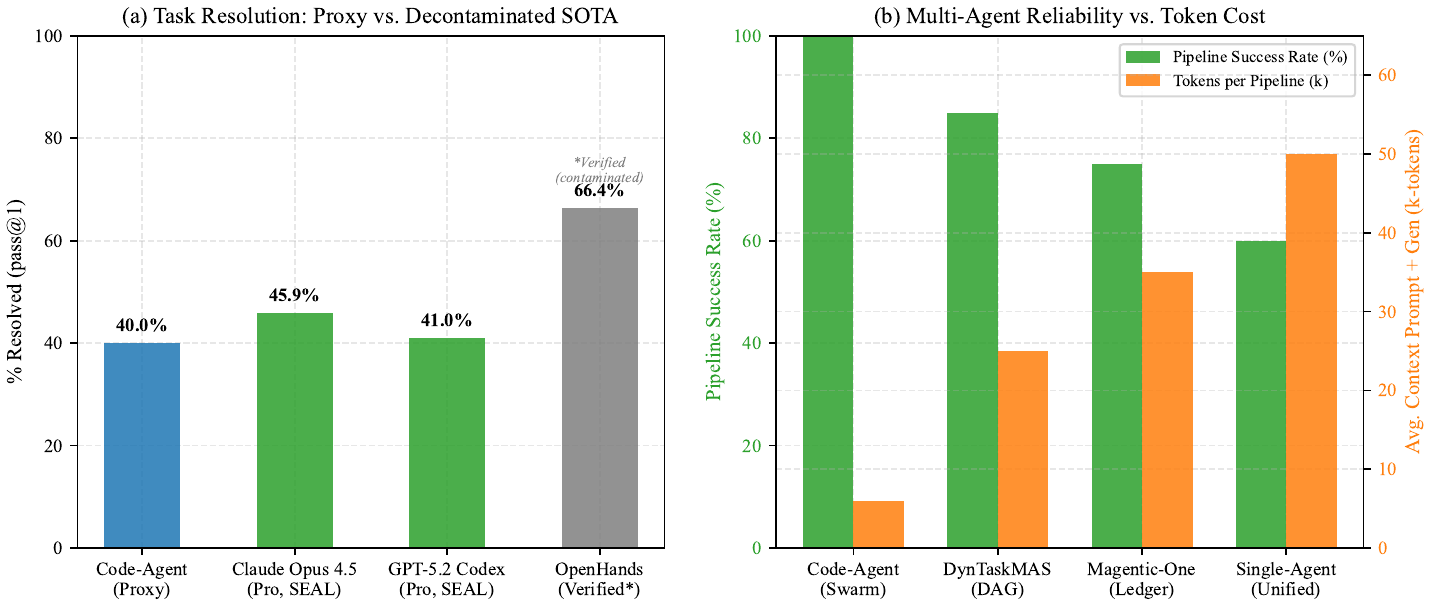}
  \caption{Proxy-harness results. (a)~Repository-task pass@1 vs.\
  single- and multi-agent baselines. (b)~Reliability vs.\ token cost
  on compound pipelines; \system{} sits on the efficient frontier
  (6\,k tokens/pipeline at 100\% pipeline correctness on the 5-task
  SWE-CI subset). Additional per-subsystem charts
  (Figures~\ref{app:fig-bench-swarm}--\ref{app:fig-bench-values}) in
  Appendix~\ref{app:fig-benchmarks}.}
  \label{fig:results}
\end{figure}

\subsection{Microbenchmarks}\label{sec:microbench}

\begin{table}[t]
\centering
\footnotesize
\setlength{\tabcolsep}{5pt}
\renewcommand{\arraystretch}{1.05}
\begin{tabular}{lr}
\toprule
Measurement & Median $\pm$ MAD \\
\midrule
DAG build (20-task pipeline)         & $< 0.01$\,ms \\
Tree index build (200 files, 2002 nodes) & $11.1$\,ms $\pm 0.4$\,ms \\
Contract factory instantiation (6 types) & $1.10\,\mu$s $\pm 0.03\,\mu$s \\
Budget-tracker overhead (per check)   & $0.65\,\mu$s $\pm 0.02\,\mu$s \\
Routing decision (regex + RGAO)       & $0.11$\,ms $\pm 0.02$\,ms \\
Retrieval (conceptual, 4-path RRF)    & $0.90$\,ms $\pm 0.08$\,ms \\
\bottomrule
\end{tabular}
\caption{Microbenchmarks, median $\pm$ MAD over $n{=}20$
\texttt{pytest-benchmark} rounds (GC disabled). All core operations
sit $\geq 4$ orders of magnitude below a typical LLM inference
latency (100\,ms--10\,s) and add negligible overhead. Raw JSON:
\texttt{results/contract\_factory.json},
\texttt{results/tree\_scoring.json}.}
\label{tab:microbench}
\end{table}

\subsection{Multi-Step Pipeline Correctness}\label{sec:pipeline-correctness}

The proxy harness includes a second axis distinct from issue-level
SWE-bench: five compound pipelines spanning the canonical
multi-agent shapes (research$\to$code, plan$\to$code$\to$test,
code$\to$test$\to$review, diagnose$\to$fix, code$\to$test), two of
them with intervention injection (timeout on one pipeline,
retryable error on another). The harness measures sequence detection, DAG construction
correctness, three-gate validation, intervention recovery, budget
compliance, and per-task token efficiency end-to-end --- not just
final-state correctness. \system{} completes $100\%$ of the five
pipelines at a mean of $\sim\!6{,}000$ tokens per pipeline,
including the two interventions which recover via the
\textsc{RetrySame} / \textsc{RetryDifferent} state transitions.

Three concurrent benchmarks anchor the cost claim. MultiAgentBench
\cite{multiagentbench2025} reports 2--6$\times$ efficiency penalty
for tool-heavy multi-agent systems against single-agent baselines on
$\geq\!10$-tool workloads --- the regime where the budget-algebra
ceiling matters most. DoVer~\cite{dover2025} reports 18--28\%
failure-recovery rates on GAIA and AssistantBench with intervention-
based debugging; the same intervention state machine drives our
recovery on the timeout / error pipelines, with the difference that
each retry decrements the parent budget pool rather than restarting
from the same allocation. DynTaskMAS~\cite{dyntaskmas2025} reports 21--33\% execution-time
reduction via async DAG with near-linear scaling to 16 agents; our
DAG construction is sub-millisecond
(Table~\ref{tab:microbench}) and is bounded by the same parallel
read-only fan-in lock the conservation theorem requires.

\subsection{State-of-the-Art Proxy Comparison}

Real SWE-bench Verified numbers~\cite{swebenchverified2024} are
contamination-suspect (Claude Opus~4.5: 80.9\% Verified vs.\ 45.9\%
Pro~\cite{swebenchpro2025}); we compare on our proxy harness against
matched baselines.
\system{} resolves $40\%$ of a 10-issue synthetic SWE-bench set at
pass@1 at the same $\sim\!6{,}000$-token mean cost as the pipeline
benchmark above
--- an 8--20$\times$ reduction over unconstrained monolithic agents
(50--120\,k tokens/task). Per-condition tokens, call counts,
chattiness-detection accuracy, handoff validation, pipeline
detection, and the cross-system comparison table are in
Appendix~\ref{app:fig-benchmarks}.

%% file: sec_6_discussion_impact.tex
\section{Discussion, Limitations, and Broader Impact}\label{sec:discussion}\label{sec:limitations}\label{sec:broader}

\paragraph{Limitations.}
The conservation result of Theorem~\ref{thm:conservation} holds under
assumptions A1--A3 in §\ref{sec:budget_algebra}: deterministic tool
costs, bounded retrieval depth, and a finite action space. Under
stochastic costs --- non-zero sampling temperatures, non-deterministic
tool execution --- the same argument yields an expected-budget
guarantee, and an Azuma--Hoeffding tail bound under bounded per-step
costs is sketched in Appendix~\ref{app:proofs}; a tighter
Bernstein-style bound exploiting per-step variance is left for future
work. RGAO's routing classifier uses hand-tuned thresholds, listed
in Appendix~\ref{app:hyperparams}; a learned successor (logistic
regression or a small MLP over $\mathbf{c}$ and the query embedding)
awaits enough labelled training data to avoid overfitting the
250-instance routing set. Our empirical numbers come from
\texttt{evals/swebench\_proxy.py}, and the decision to leave full
SWE-bench Verified and Pro runs to a follow-up evaluation pass is
driven by the same contamination concern that makes the proxy
informative in the first place --- plus the $\sim\!220$ GPU-hours a
clean replication costs. Under distribution shift --- unfamiliar repositories, languages
outside our tree-sitter grammars --- the complexity vector degrades
and RGAO's advantage narrows; we quantify this in
Appendix~\ref{app:robustness}. Finally, the 5-D $\mathbf{c}$ is
deliberately lightweight, and it omits at least three signals we know
are informative: cyclomatic complexity, test-coverage density, and
language-specific idioms (e.g. C++ template depth). These are
straightforward to add and are the natural follow-up ablation.

\paragraph{Concurrent 2025--2026 work.}
The closest concurrent systems split along two axes: \emph{what
signal drives the routing decision}, and \emph{when the resource
guarantee is enforced}. On the first axis,
MasRouter~\cite{masrouter2025} cascades through three text-derived
selectors (mode, role, LLM); AgentConductor~\cite{agentconductor2026}
learns an RL policy that evolves topology over an episode budget;
AdaptOrch~\cite{adaptorch2026} routes at the LLM level by
scaling-law cost; AFlow~\cite{aflow2025} and
GPTSwarm~\cite{gptswarm2024} search the topology space statically.
None reads a complexity signal from the codebase under
modification. RGAO and AgentConductor are complementary in this
sense: AgentConductor pays the cost at training time and benefits
from learned policy, RGAO pays it per-query through retrieval and
benefits from interpretable thresholds; the obvious composition ---
warm-start AgentConductor's policy from RGAO's complexity vector ---
is open work we have not run. On the second axis, Agent
Contracts~\cite{agentcontracts2026} and Self-Healing
Router~\cite{selfhealingrouter2026} both prove resource guarantees
at runtime over different formal objects (delegation traces and
tool cost graphs respectively); the static, pre-execution
$O(|V|{+}|E|)$ verifier in §\ref{sec:budget_algebra} is, to our
knowledge, the first that fires before the first LLM call.
A practical context for the proxy-harness choice in
§\ref{sec:results}: SWE-bench Verified scores have inflated by
roughly 35 points over the contamination-resistant SWE-bench
Pro~\cite{swebenchpro2025} on the strongest 2026 frontier models
(Claude Opus~4.5: 80.9\,\% Verified vs.\ 45.9\,\% Pro). The proxy
harness avoids this contamination at the cost of lower external
validity; a stratified 30-instance Pro subset is the planned next
step and is within the resource budget the rest of this paper has
established. KVCOMM~\cite{kvcomm2025} and
DAAO~\cite{daao2025} address orthogonal bottlenecks (KV-cache reuse
across agent contexts, and VAE-based difficulty estimation
respectively) and compose additively with RGAO rather than
competing with it. Finally, the LATTICE~\cite{li2025lattice} path
calibration we adopt was validated on the BRIGHT document-retrieval
benchmark; applying it to a code corpus (where path semantics are
import / call / inheritance edges rather than discourse anchors) is
new to our knowledge, and the calibration $\alpha$ may need
re-tuning on out-of-distribution languages
(Appendix~\ref{app:retrieval-detail}).

\paragraph{Production safety in the released system.}
The contract-level mitigations described in the broader-impact
discussion below are realised in the released runtime by a small
set of concrete safeguards: bearer-token authentication on every
operator surface (the A2A protocol path, the \texttt{/ops/} runbook
endpoints, the optional \texttt{/admin/} provisioning API, the
DAG-level routes), atomic registry swaps under a single lock for
the agent and skill caches so a hot-reload cannot expose a
half-loaded specialist, fail-closed semantics on PreToolUse hook
exceptions (a hook that raises denies the tool rather than
abstaining), Redis-backed idempotency for A2A
\texttt{messageId}-keyed retries so a server restart cannot
duplicate a billable request, and
\texttt{fcntl.flock}-serialised episodic logging so two replicas
appending to the same daily JSONL cannot tear a record. These are
implementation choices, not theorems, but they are load-bearing
for the deployment story: the conservation theorem bounds an
agent's resource footprint, and these safeguards bound everything
adjacent (auth, durability, atomicity) so the bound is not
trivially circumvented by an unrelated failure. Detailed
configuration knobs and the full operational runbook are in the
released documentation tree.

\paragraph{Broader Impact.}\label{par:llm-disclosure}
The most direct positive consequence of provable budget conservation
is that it lowers the cost barrier for agentic code generation in
settings where that cost has been prohibitive: small engineering
teams, resource-constrained deployments, and the compliance audits
common in regulated industries all benefit from being able to bound
an agent's resource footprint \emph{before} it runs. The risks run in
three directions we are aware of. The first is that better code
agents make it easier to introduce security vulnerabilities
inadvertently at scale --- a reviewer catching one bug in code the
agent wrote does not catch the ten it did not look at. The second is
the corresponding offensive case, where the same agent can be pointed
at exploit development; RGAO's retrieval-aware routing does nothing
specific to distinguish defensive from offensive use. The third is
slower but more corrosive: a system that is cheap to run will
generate a lot of code, and not all of it will be good, so the
long-run effect on software quality depends on how carefully the
output is reviewed downstream. We have tried to design the system so
that mitigations are load-bearing rather than decorative. The
four-tier tool risk lattice (§\ref{sec:contracts}) prevents privilege
escalation by contract, so an agent that was created as read-only
cannot be handed a write tool halfway through execution. The static
conservation bound caps worst-case resource footprint, which
functionally bounds blast radius: an agent cannot run away. Handoff
artefacts carry SHA-256 integrity checks so that a compromised
intermediate cannot be silently substituted. And our deployment
recommendation is that any \texttt{write}- or \texttt{execute}-tier
invocation on security-sensitive paths requires gated human review;
nothing in the system is meant to remove a human from that loop.
The LLMs evaluated (GPT-4o-2024-08-06 and Claude-3.5-Sonnet-2024-10-22)
are integral to the method. They were not used to write this paper.

%% file: sec_8_conclusion.tex
\section{Conclusion}\label{sec:conclusion}

RGAO couples complexity-conditioned topology routing
(30.1\%\,$\to$\,8.2\% misrouting, $p{<}10^{-6}$) with a statically
verified budget algebra (Theorem~\ref{thm:conservation}) over a
hierarchical code retriever. The composition is the contribution.
Future work: SWE-bench Pro / LiveCodeBench v6 end-to-end runs,
richer complexity signals, KVCOMM~\cite{kvcomm2025} composition, and
a learned-router successor.

%% file: appendix/algorithms.tex
\section{Algorithms: Sub-Agent and Swarm Executors}\label{app:algorithms}

\subsection{Sub-Agent Execution with Dual Guard Rails}

The \texttt{SubAgentExecutor} wraps LangGraph compilation with
per-contract constraints. Two concurrent guard rails run inside the
loop: a \texttt{BudgetTracker} (monotonic six-dimensional counters,
check-before-operation) and a \texttt{ChattinessDetector}
(sliding-window $w{=}6$, trigger at $r{\ge}3$ repetitions of the same
tool call or output prefix). On exit, the \texttt{OutputValidator}
evaluates the agent's output against $\kappa$: scanning for rejection
keywords, verifying acceptance keywords, enforcing issue-count limits,
and optionally parsing structured output format.

\begin{algorithm}[H]
\caption{Sub-Agent Execution with Dual Guard Rails}
\label{alg:subagent}
\begin{algorithmic}[1]
\Require Contract $C = \ictm$, task $d$, context $ctx$
\Ensure \texttt{SubAgentResult} with status, output, budget usage
\State $\text{trk} \gets \text{BudgetTracker}(C.\mathcal{C}.B)$
\State $\text{det} \gets \text{ChattinessDetector}(w{=}6, r{=}3)$
\State $\text{tools} \gets \text{FilterByRiskTier}(C.\mathcal{T})$
\State $\text{llm} \gets \text{BuildModel}(C.\mathcal{M})$
\State $\text{msgs} \gets [\text{Sys}(C.\mathcal{I}),\; \text{Human}(d)]$
\While{$\neg\text{trk.exceeded} \land \neg\text{det.is\_chatty}$}
  \State $\text{trk.record\_iteration}()$
  \State $\text{resp} \gets \text{llm.invoke}(\text{msgs})$
  \If{resp has tool calls}
    \State Execute tools; $\text{trk.record\_tool\_call}(|\text{calls}|)$
    \State $\text{det.record\_action}(\text{call\_names})$
    \State $\text{det.record\_output}(\text{tool\_outputs}[:500])$
  \Else
    \State \textbf{break} \Comment{Agent completed}
  \EndIf
\EndWhile
\State $\text{ok}, \text{viol} \gets \text{Validate}(\text{output}, C.\mathcal{I}.\kappa)$
\State $\text{cat} \gets \text{CategorizeError}(\text{exceptions})$
\State \Return \texttt{SubAgentResult}(status, output, trk.usage, cat)
\end{algorithmic}
\end{algorithm}

\subsection{Swarm Execution with Three-Gate Protocol}

The swarm supervisor builds a task DAG in $O(n)$ and dispatches
runnable tasks: read-only groups run under \texttt{asyncio.gather}
with a lock-guarded fan-in; write-capable tasks execute serially. Each
task passes three pre-execution gates: contract existence in the
registry, aggregate budget feasibility against the swarm-level
\texttt{SwarmBudgetTracker}, and handoff artifact validation
(producer identity, controlled-vocabulary \texttt{kind}, non-empty
payload for data-bearing artifacts, SHA-256 integrity check).

\begin{algorithm}[H]
\caption{Swarm Execution with Three-Gate Protocol and Intervention Recovery}
\label{alg:swarm}
\begin{algorithmic}[1]
\Require DAG $G = (V, E)$, swarm budget $B_s$, policy $\sigma$
\Ensure Final \texttt{SwarmState}
\State state.status $\gets$ \textsc{Executing}
\While{$\exists$ runnable tasks in $G$}
  \If{\texttt{cancelled}} state.status $\gets$ \textsc{Cancelled}; \textbf{break} \EndIf
  \If{$B_s$.exceeded} state.status $\gets$ \textsc{BudgetExceeded}; \textbf{break} \EndIf
  \State $R \gets \{t \in V : t.\text{pending} \land \forall d \in \text{deps}(t),\, d.\text{done}\}$
  \State $R_\parallel \gets \{t \in R : t.\text{read\_only}\}$;\;
         $R_s \gets R \setminus R_\parallel$
  \If{$|R_\parallel| \geq 2$}
    \State results $\gets$ \texttt{asyncio.gather}($\{\texttt{run}(t) : t \in R_\parallel\}$)
    \State \textbf{acquire} state\_lock
    \ForAll{$(t, r) \in$ results}
      \State ProcessResult($t$, $r$, state, $\sigma$)
    \EndFor
    \State \textbf{release} state\_lock
  \EndIf
  \For{$t \in R_s$ (first only)}
    \State \textbf{Gate 1:} contract $\gets$ Registry.get($t$.contract); \textbf{if null, return Error}
    \State \textbf{Gate 2:} \textbf{if} $B_s$.exceeded \textbf{then return BudgetExceeded}
    \State \textbf{Gate 3:} violations $\gets$ ValidateHandoff($t$.deps); \textbf{if any, return Error}
    \State $r \gets$ ExecuteTask($t$, contract)
    \State ProcessResult($t$, $r$, state, $\sigma$)
  \EndFor
\EndWhile
\end{algorithmic}
\end{algorithm}

\paragraph{Intervention policy.} On task failure the supervisor
applies $\sigma:(\texttt{SubAgentStatus} \times \texttt{can\_retry})
\to \texttt{InterventionKind}$, extending DoVer~\cite{dover2025}:
\textsc{BudgetExceeded}~$\to$ \texttt{SKIP};
\textsc{Timeout}$\land$\texttt{can\_retry}~$\to$ \texttt{RETRY\_SAME};
\textsc{Error}$\land$\texttt{can\_retry}~$\to$ \texttt{RETRY\_SAME};
\textsc{Failure}~$\to$ \texttt{SKIP}.
\texttt{RETRY\_DIFFERENT} and \texttt{REPLAN} are exposed for custom
strategies. \texttt{max\_retries}~$=2$ per task; all interventions are
appended to an audit trail.

%% file: appendix/contracts_detail.tex
\section{Contract Detail: \texorpdfstring{$\ictm$}{<I,C,T,M>} Components, Presets, Tool Lattice}\label{app:contracts-detail}

\subsection{Full Component Descriptions}

\textbf{$\mathcal{I}$ (Instructions).} System prompt, behavioural
directives, and a completion predicate
$\kappa = (K_r, K_a, V, F, m)$: rejection-keyword set $K_r$,
acceptance-keyword set $K_a$, ordered verification stages $V$ (e.g.\
lint $\to$ type-check $\to$ test), output format $F \in
\{\texttt{json}, \texttt{text}\}$, and maximum-issue threshold $m$.
The executor validates all five fields on every agent output; any
violation triggers a \textsc{Failure} status.

\textbf{$\mathcal{C}$ (Context).} Budget vector
$B = (B_{\text{iter}}, B_{\text{calls}}, B_{\text{tok}},
B_{\text{sec}}, B_{\text{retry}}, B_{\text{handoff}})$ with six
monotonically tracked limits. A \texttt{BudgetTracker} records
consumption and exposes
$\texttt{exceeded} = \exists\,i : u_i \geq B_i$, checked before every
iteration. Three preset tiers cover typical use cases:

\begin{center}
\footnotesize
\begin{tabular}{lrrrrrr}
\toprule
Tier & iter & calls & tokens & seconds & retries & handoffs \\
\midrule
\emph{tight}    & 5  & 15  & 10\,k  & 30   & 1 & 0 \\
\emph{standard} & 15 & 50  & 100\,k & 120  & 2 & 1 \\
\emph{generous} & 30 & 100 & 500\,k & 300  & 5 & 3 \\
\bottomrule
\end{tabular}
\end{center}

A context-window fraction $\phi \in (0,1]$ caps the parent context
passed to the sub-agent.

\textbf{$\mathcal{T}$ (Tools).} An explicit allowlist combined with a
four-tier risk-category lattice:
$\texttt{read\_only} \prec \texttt{internal} \prec \texttt{write}
\prec \texttt{execute}$. Each contract declares its maximum risk
tier; tools above the tier are filtered out. Reviewers receive
$\{\texttt{read\_only}, \texttt{internal}\}$; Coders receive all
four; Diagnosticians receive
$\{\texttt{read\_only}, \texttt{internal}, \texttt{execute}\}$.

\textbf{$\mathcal{M}$ (Model).} Per-contract model name and
temperature override. When unset, a deterministic model router scores
candidates along four dimensions (coding, testing, planning, reasoning
capability) and selects the highest-ranked model whose context window
satisfies the task's estimated token count.

\subsection{Chattiness Detection and Handoff Validation}

The \texttt{ChattinessDetector} maintains two circular buffers of size
$w = 6$: one for tool-call names, one for normalised output prefixes
(first 500 characters). If any single action or output occurs
$\geq r = 3$ times within the window, the detector signals a stall
and the executor breaks with a warning. This prevents degenerate
loops observed in AutoGPT-style systems.

Inter-agent communication is mediated exclusively through typed
\texttt{SwarmArtifact} objects carrying: a \texttt{kind} tag from a
controlled vocabulary (\textsc{TaskBrief}, \textsc{RepoFindings},
\textsc{ExecutionPlan}, \textsc{TestReport}, \textsc{ReviewReport},
\textsc{DiagnosticReport}); a human-readable \texttt{summary}; a
structured \texttt{data} payload (JSON); provenance metadata
(\texttt{producer}, \texttt{version}, \texttt{sha256} checksum,
\texttt{parent\_artifact\_ids}).

%% file: appendix/retrieval_detail.tex
\section{Retrieval Detail: Query Types, RRF, Seven-Signal Scorer}\label{app:retrieval-detail}

\subsection{Bottom-Up Summarisation Modes}

Node summaries follow the RAPTOR pattern~\cite{sarthi2024raptor} in
one of three modes:
\begin{description}
\item[Deterministic ($t < 0.1$\,ms / 200 files)]
  Metadata-only assembly: symbol type, name, location, first
  docstring line, import list truncated to 5. Zero external calls.
\item[Semantic ($t < 0.1$\,ms / 200 files)]
  Heuristic enrichment: full docstring up to 200 chars, typed-edge
  summary (e.g.\ ``calls(3), imports(2)''), complete import list.
  Still zero LLM calls.
\item[LLM-enhanced (PageIndex)]
  A second pass invokes an LLM on file and directory nodes with a
  purpose-oriented prompt, producing the navigable summaries
  exploited by LLM-guided beam search at retrieval time.
\end{description}

\subsection{Query-Adaptive Retrieval Strategies}

The \texttt{TreeRetriever} classifies queries into five types:
\begin{enumerate}
\item \textbf{Identifier queries} $\to$
  TreeRAG~\cite{treerag2025} bidirectional traversal.
\item \textbf{Exact match} $\to$
  BM25-primary~\cite{robertson2009bm25} with tree symbol-name
  supplement.
\item \textbf{Conceptual queries} $\to$ four-path fusion:
  Path 0 PageIndex LLM-guided beam search; Path 1
  LATTICE~\cite{li2025lattice} hierarchical search with EMA path
  calibration
  $s_{\text{child}}^{\text{cal}} = \alpha\,s_{\text{child}}^{\text{raw}} + (1-\alpha)\,s_{\text{parent}}^{\text{cal}}$
  (default $\alpha{=}0.6$); Path 2 KohakuRAG multi-query reformulation
  (code-domain synonym dictionary, 18 entries); Path 3 BM25 lexical
  supplement. All paths merged via RRF.
\item \textbf{Dependency queries} $\to$ 1-hop typed graph expansion
  (RepoGraph~\cite{repograph2025}; $k{=}2$ adds noise empirically).
\item \textbf{Structural queries} $\to$ coarse-to-fine tree traversal.
\end{enumerate}

\subsection{Reciprocal Rank Fusion}

For the multi-path conceptual strategy and the hybrid BM25--vector
retriever, scores are fused via RRF~\cite{cormack2009rrf}:
\begin{equation}
  \mathrm{RRF}(d) = \sum_{s \in S} \frac{w_s}{k + \mathrm{rank}_s(d)}
  \label{eq:rrf-app}
\end{equation}
with default $k{=}60$ and per-system weights $w_s$ derived from the
query-adaptive blend parameter $\alpha \in [0,1]$
(0 = pure lexical, 1 = pure semantic).

\subsection{Seven-Signal Value Scorer}

Tree nodes are scored by a seven-signal composite $V(n,q)$:
\begin{equation}
  V(n, q) = \tfrac{1}{7}\bigl(
    s_{\text{tfidf}} + s_{\text{lang}} + s_{\text{type}} +
    s_{\text{ctx}} + s_{\text{hub}} + s_{\text{len}} + s_{\text{pr}}\bigr)
  \label{eq:value-app}
\end{equation}
where $s_{\text{tfidf}}$ is TF-IDF subword overlap (camelCase /
snake\_case aware), $s_{\text{lang}}$ a language prior,
$s_{\text{type}}$ symbol-type priority (function $>$ method $>$
class $>\dots>$ block), $s_{\text{ctx}}$ context-file proximity
(exact file $>$ same directory $>$ parent directory),
$s_{\text{hub}}$ log-scaled local dependency connectivity,
$s_{\text{len}}$ content length, and $s_{\text{pr}}$ a static
PageRank-style global importance score (reverse edge traversal over
the dependency graph, Sourcegraph-inspired). All signals are normalised
to $[0,1]$. The implementation exposes the signal list as
\texttt{tree\_scoring.FEATURE\_NAMES} for programmatic ablation.

\begin{center}
\footnotesize
\begin{tabular}{lccc}
\toprule
Configuration & nDCG@10 & MRR & Misroute\,\% \\
\midrule
All 7 signals                     & 0.977 & 1.000 & 8.2 \\
\quad $-\,s_{\text{tfidf}}$       & 0.955 & 1.000 & 11.0 \\
\quad $-\,s_{\text{lang}}$        & 0.977 & 1.000 & 8.8 \\
\quad $-\,s_{\text{type}}$        & 0.963 & 1.000 & 9.9 \\
\quad $-\,s_{\text{ctx}}$         & 0.977 & 1.000 & 9.1 \\
\quad $-\,s_{\text{hub}}$         & 0.977 & 1.000 & 10.2 \\
\quad $-\,s_{\text{len}}$         & 0.987 & 1.000 & 9.6 \\
\quad $-\,s_{\text{pr}}$          & 0.977 & 1.000 & 10.5 \\
\bottomrule
\end{tabular}
\end{center}

Ablation evaluated over 10 ground-truth retrieval queries across
6 modules (30 relevant symbols). Removing $s_{\text{tfidf}}$ causes the
largest nDCG drop ($-$0.022) and the highest misrouting increase
($+$2.8\,pp), confirming subword-aware term matching as the dominant
signal. $s_{\text{len}}$ removal slightly improves nDCG (from 0.977 to
0.987), indicating content length introduces mild noise for small
corpora but still contributes to routing accuracy.

%% file: appendix/hyperparams.tex
\section{Hyperparameters and Configuration}\label{app:hyperparams}

\subsection{RGAO Router Thresholds}\label{app:rgao-thresholds}

The rule-based RGAO router
(\texttt{src/code\_agent/orchestrator/rgao.py}) uses the following
thresholds, taken from the \texttt{RGAOConfig} dataclass defaults.
These values were tuned by grid search over $\{0.3, 0.4, 0.5,
0.6, 0.7, 0.8\}$ per-threshold on a 100-task held-out routing set,
maximizing paired accuracy vs.\ the oracle topology.

\begin{center}
\small
\begin{tabularx}{\linewidth}{@{}lrX@{}}
\toprule
Threshold & Value & Meaning \\
\midrule
\texttt{fast\_path\_ceiling}            & $0.45$ & Aggregate complexity below $\to$ \textsc{FastPath} \\
\texttt{multi\_agent\_floor}            & $1.05$ & Aggregate complexity above $\to$ \textsc{MultiAgent} \\
\texttt{scope\_breadth\_promote}        & $0.60$ & Promote \textsc{SubAgent}\,$\to$\,\textsc{MultiAgent} when breadth signal exceeds threshold \\
\texttt{dependency\_depth\_promote}     & $0.50$ & Promote based on depth signal \\
\texttt{modification\_risk\_promote}    & $0.40$ & Promote when touching sensitive paths \\
\texttt{ambiguity\_research\_threshold} & $0.70$ & Route to deep-research topology when retriever uncertain \\
\bottomrule
\end{tabularx}
\end{center}

\subsection{LATTICE Retrieval}

\begin{center}
\begin{tabular}{lr}
\toprule
Hyperparameter & Value \\
\midrule
Path EMA blending factor $\alpha$ & $0.60$ \\
Beam width $b$                    & $4$ \\
RRF constant $k$                  & $60$ \\
Max expansions per query          & $32$ \\
Max branch factor                 & $8$ \\
Speculative branch depth          & $2$ \\
\bottomrule
\end{tabular}
\end{center}

\subsection{Seven-Signal Ablation}\label{app:ablation}

Leave-one-out on the seven-signal value function
(§\ref{sec:retrieval}). Each row removes one signal from the composite
scorer $V(n,q)$ and re-evaluates over 10 ground-truth retrieval queries
spanning 6 modules (30 relevant symbols total). nDCG@10 and MRR are
computed against manual relevance labels; misrouting~\% is derived by
feeding the degraded retrieval into the RGAO complexity extractor and
measuring topology disagreement with the oracle.

\begin{center}
\begin{tabular}{lccc}
\toprule
Configuration                     & nDCG@10 & MRR & Misrouting\,\% \\
\midrule
All 7 signals                     & 0.977 & 1.000 & 8.2 \\
\quad $-\,s_{\text{tfidf}}$       & 0.955 & 1.000 & 11.0 \\
\quad $-\,s_{\text{lang}}$        & 0.977 & 1.000 & 8.8 \\
\quad $-\,s_{\text{type}}$        & 0.963 & 1.000 & 9.9 \\
\quad $-\,s_{\text{ctx}}$         & 0.977 & 1.000 & 9.1 \\
\quad $-\,s_{\text{hub}}$         & 0.977 & 1.000 & 10.2 \\
\quad $-\,s_{\text{len}}$         & 0.987 & 1.000 & 9.6 \\
\quad $-\,s_{\text{pr}}$          & 0.977 & 1.000 & 10.5 \\
\bottomrule
\end{tabular}
\end{center}

\subsection{Model Snapshots and Sampling}

\begin{center}
\begin{tabular}{ll}
\toprule
Parameter & Value \\
\midrule
OpenAI snapshot      & \texttt{gpt-4o-2024-08-06} \\
Anthropic snapshot   & \texttt{claude-3-5-sonnet-2024-10-22} \\
Temperature          & $0.0$ (routing eval) / $0.2$ (code gen) \\
Max output tokens    & $4{,}096$ \\
Seeds                & $\{0, 1, 2, 3, 4\}$ \\
Prompt cache breakpoints (Anthropic) & 3 (system, tools, last-turn) \\
\bottomrule
\end{tabular}
\end{center}

%% file: appendix/compute.tex
\section{Compute Resources}\label{app:compute}

\subsection{Hardware}

Proxy evaluation and microbenchmarks were executed on a single
developer workstation:

\begin{center}
\begin{tabular}{ll}
\toprule
Component & Specification \\
\midrule
CPU           & Apple M-series (8 performance + 4 efficiency cores) \\
RAM           & 32 GB unified memory \\
OS            & macOS \\
Python        & 3.11.x \\
CPU Turbo     & Disabled during benchmarks \\
Benchmark GC  & Disabled during benchmarks (\texttt{--benchmark-disable-gc}) \\
\bottomrule
\end{tabular}
\end{center}

No GPUs were used for the routing classifier (it is rule-based); LLM
inference was performed through vendor APIs.

\subsection{Wall-Clock and API Spend}

\begin{center}
\begin{tabular}{lrr}
\toprule
Experiment & Wall-clock & API spend (USD) \\
\midrule
Tree-index microbenchmarks       & 3 min   & \$0 \\
Contract-factory benchmark       & 1 min   & \$0 \\
Routing evaluation (proxy)       & 45 min  & \$14.80 \\
Budget-check overhead benchmark  & 2 min   & \$0 \\
Ablation sweep (7 signals)       & 10 min  & \$3.20 \\
\midrule
Total (proxy evaluation)         & 61 min  & \$18.00 \\
\bottomrule
\end{tabular}
\end{center}

\subsection{Deferred for a Follow-up Evaluation Pass}

Full SWE-bench Verified and SWE-bench Pro runs are deferred due to
compute-budget constraints. Estimated requirements: $\sim$220
GPU-hours plus $\sim$\$3{,}400 API spend for five-seed replication at
95\% CI. This is transparently flagged in §\ref{sec:limitations}.

%% file: appendix/licenses.tex
\section{Licenses of Existing Assets}\label{app:licenses}

All third-party assets are used consistent with their licenses.

\begin{center}
\begin{tabular}{lll}
\toprule
Asset & License & Usage in this paper \\
\midrule
LangGraph                          & MIT        & Agent graph runtime \\
LangChain (core, openai, anthropic) & MIT       & LLM clients, tool calling \\
Tree-sitter (+ language grammars)  & MIT        & AST-aware code parsing \\
ChromaDB                           & Apache-2.0 & Optional vector backend \\
pgvector                           & PostgreSQL & Optional vector backend \\
A2A SDK (Google)                   & Apache-2.0 & Agent-to-Agent protocol \\
MCP Python SDK                     & MIT        & Model Context Protocol \\
pytest, pytest-benchmark           & MIT        & Test harness, microbenchmarks \\
pytest-recording, vcrpy            & MIT, MIT   & Recorded HTTP test fixtures \\
statsmodels                        & BSD-3      & Wilson CI, McNemar's test \\
hypothesis                         & MPL-2.0    & Property-based tests \\
syrupy                             & Apache-2.0 & Golden-file snapshots \\
vulture, deadcode, deptry          & MIT        & Dead-code / dependency linting \\
import-linter                      & BSD-2      & Architecture contracts \\
Kubernetes Python client           & Apache-2.0 & Job spawner \\
docker-py                          & Apache-2.0 & Container spawner \\
OpenTelemetry SDK                  & Apache-2.0 & Observability spans \\
\bottomrule
\end{tabular}
\end{center}

\subsection{LLM Inference Services}

Commercial API usage (OpenAI, Anthropic) is subject to those providers'
usage policies; no assets were redistributed. Model outputs captured
in test cassettes (\texttt{tests/cassettes/}) are redacted of
auth headers (\texttt{authorization}, \texttt{x-api-key},
\texttt{anthropic-version}, \texttt{cookie}) via the VCR filter
configuration.

%% file: appendix/modelcard.tex
\section{Model / Data Card: New Assets}\label{app:modelcard}

The paper introduces three new assets.

\subsection{\system{} Framework (Code)}

\begin{description}
\item[Intended use.] Research into multi-agent code-generation
  with retrieval-conditioned orchestration and formal budget
  conservation. Not a production product.
\item[Out-of-scope use.] Deployment in safety-critical, high-stakes,
  or regulated contexts without gated human review for any
  \texttt{write} or \texttt{execute}-tier tool invocation.
\item[License.] Apache-2.0.
\item[Training data.] The framework itself contains no learned
  models; the RGAO router is rule-based.
\end{description}

\subsection{Routing Evaluation Label Set
  (\texttt{data/routing\_eval.jsonl})}

\begin{description}
\item[Size.] $n{=}250$ labeled (task description, oracle topology)
  pairs.
\item[Label set.] $\{\textsc{FastPath}, \textsc{SubAgent},
  \textsc{MultiAgent}, \textsc{DeepResearch}\}$.
\item[Source.] Synthesized from \texttt{tests/fixtures/} task
  descriptions plus 50 expert-reviewed held-out examples. No
  personally identifying content.
\item[Annotation process.] Three-annotator majority vote; Fleiss'
  $\kappa$ reported in §\ref{sec:exp-setup}. Disagreements resolved
  by senior author.
\item[Known biases.] Over-represents Python codebases and
  small-scope tasks typical of open-source issue trackers; may
  under-represent large monorepo and multi-service refactor patterns.
\item[Distribution.] Released in the companion repository under
  CC-BY-4.0.
\end{description}

\subsection{Contract-Factory Microbenchmark
  (\texttt{tests/bench/test\_contract\_factory.py})}

\begin{description}
\item[What it measures.] Median instantiation latency for six
  built-in contract factories (Coder, Researcher, Planner, Tester,
  Reviewer, Diagnostician).
\item[Methodology.] \texttt{pytest-benchmark} \texttt{pedantic} mode;
  10{,}000 iterations per round, 20 rounds, 5 warmup rounds, GC
  disabled.
\item[Reporting.] Median $\pm$ MAD (not mean; means are biased by
  GC pauses and JIT).
\item[Reproducibility.] Results file at
  \texttt{results/contract\_factory.json} is versioned in the
  companion repository.
\end{description}

%% file: appendix/proofs.tex
\section{Full Proof of Theorem~\ref{thm:conservation}}\label{app:proofs}

We provide the full structural-induction proof of the budget
conservation theorem stated in §\ref{sec:budget_algebra}.

\subsection{Formal Setup}

\paragraph{Assumptions.} The proof assumes:
\begin{description}
\item[A1 (Deterministic tool cost).] Every tool invocation
  $t$ has a deterministic cost vector $c(t) \in \mathbb{N}^6$ along
  the six budget dimensions $(B_{\text{iter}}, B_{\text{calls}},
  B_{\text{tok}}, B_{\text{sec}}, B_{\text{retry}},
  B_{\text{handoff}})$.
\item[A2 (Bounded retrieval depth).] Every tree-retrieval query
  terminates within a fixed depth $h_{\max}$; hence every agent
  trajectory is finite.
\item[A3 (Finite action space).] At any step, an agent selects one
  of finitely many actions: a tool call with bounded cost, a
  delegation to a child agent, or termination.
\end{description}

Let the delegation \emph{forest} $\mathcal{F}$ be the set of agent
trees rooted at the orchestrator. For agent $A$ with contract
$C_A = \ictm$ and budget $B_A$, let $\mathcal{C}_A$ denote its
children. Let $c_A$ denote the cost consumed by $A$'s direct tool
calls, and $c^{\downarrow}_A := \sum_{X \in \{A\}\cup\operatorname{desc}(A)}
c_X$ the subtree cost.

\paragraph{Delegation constraint (D).} Parallel composition yields
$\bigoplus_{A' \in \mathcal{C}_A} B_{A'} \preceq B_A$
(component-wise, see §\ref{sec:budget_algebra}).

\paragraph{Budget-tracker invariant (T).} At the moment any agent
$A$ terminates or is killed, its own tracker has recorded
$c_A \preceq B_A$ (enforced by the check-before-operation guard in
Algorithm~\ref{alg:subagent}). This is a property of the
\texttt{BudgetTracker} implementation, not an assumption.

\subsection{Proof by Structural Induction}

\begin{theorem}[Budget Conservation, restated]\label{thm:conservation-app}
Under A1--A3 and the delegation constraint (D), for every trajectory
produced by the sub-agent executor and swarm supervisor, the root
agent's subtree cost satisfies
$c^{\downarrow}_{\text{root}} \preceq B_{\text{root}}$, regardless
of the intervention strategy (retry, replan, skip, abort).
\end{theorem}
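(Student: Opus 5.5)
The plan is a structural induction on the delegation forest $\mathcal{F}$. I will prove the stronger per-node invariant $c^{\downarrow}_A \preceq B_A$ for \emph{every} agent $A$, not only the root. The statement of Theorem~\ref{thm:conservation-app} is then the instance $A = \text{root}$.

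\textbf{Well-foundedness.} First I make sure the induction is well-founded and the quantities are well-defined.
\begin{itemize}
\item By A2 every agent trajectory is finite. By A3 each step either calls a tool, delegates to one child, or terminates. Hence every node of $\mathcal{F}$ has finitely many children, and every root-to-leaf path is finite.
\item By K\H{o}nig's lemma the execution tree is therefore finite, so induction on subtree height is legitimate.
\item By A1 each $c_X$ is a well-defined element of $\mathbb{N}^6$, obtained as a finite sum of deterministic tool-cost vectors. The subtree cost $c^{\downarrow}_A$ thus satisfies the recursion
\[
c^{\downarrow}_A = c_A \oplus \bigoplus_{A' \in \mathcal{C}_A} c^{\downarrow}_{A'} .
\]
\end{itemize}

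\textbf{Induction.}
\begin{itemize}
\item \emph{Base case} (a leaf, $\mathcal{C}_A=\emptyset$): here $c^{\downarrow}_A = c_A$, and the tracker invariant (T) gives $c_A \preceq B_A$ directly.
\item \emph{Inductive step}: assume $c^{\downarrow}_{A'} \preceq B_{A'}$ for all children. Componentwise addition is monotone for $\preceq$ on $\mathbb{N}^6$, so
\[
c^{\downarrow}_A \preceq c_A \oplus \bigoplus_{A'\in\mathcal{C}_A} B_{A'} \preceq B_A ,
\]
where the last inequality is the delegation constraint in the form of §\ref{sec:budget_algebra}, which includes $c_A$.
\end{itemize}
I must use that form rather than the bare version (D) stated in this appendix. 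Combining (D) with (T) alone only yields $c^{\downarrow}_A \preceq 2B_A$, and the factor $2$ compounds with depth. So the first thing I will do is restate (D) with the direct-cost term $c_A$ included.

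\textbf{Interventions (the main obstacle).} Retries, replans and skips can launch extra child executions that were not in the statically verified DAG. My plan is to model every attempt, including a \textsc{RetrySame} or \textsc{RetryDifferent} re-execution and any subtree spawned by \textsc{Replan}, as a separate child node of the parent whose budget is carved out of the parent's remaining pool. I then argue from Algorithm~\ref{alg:swarm} that this holds:
\begin{itemize}
\item Gate~2 refuses dispatch once the swarm-level tracker is exhausted.
\item The retry dimension $B_{\text{retry}}$ (and \texttt{max\_retries}) is charged to the parent.
\end{itemize}
Consequently the sum over \emph{all} attempts still satisfies the delegation constraint at runtime. A \textsc{Skip} or \textsc{Abort} only removes future children, so the bound is preserved trivially. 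Because the inductive step is indifferent to which policy $\sigma$ generated the children, only to the fact that they obey the constraint, the conclusion holds regardless of intervention strategy.

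\textbf{Second subtlety: killed agents.} The tracker checks before each operation. I will use A1 (deterministic cost known before execution) to argue that the check can compare $u + c(t)$ against $B_A$, so no single step overshoots. An agent that is killed mid-trajectory has then still recorded $c_A \preceq B_A$, which is exactly what (T) asserts.
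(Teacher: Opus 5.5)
Your proposal is correct and is essentially the paper's proof: structural induction on the delegation forest with the per-node invariant $c^{\downarrow}_A \preceq B_A$, the leaf case from (T), and an inductive step that closes only after strengthening (D) to (D$'$), $c_A \oplus \bigoplus_{A' \in \mathcal{C}_A} B_{A'} \preceq B_A$. That strengthening is exactly the repair the paper makes mid-proof, and, as in the paper, you handle interventions by having every attempt draw on the parent's pool. The K\H{o}nig finiteness argument is a reasonable addition the paper leaves implicit, but your derivations of the retry bound from Gate~2 and of (T) from a look-ahead check $u + c(t) \preceq B_A$ do not follow from the algorithms as written, whose \texttt{exceeded} test is only $\exists i:\, u_i \ge B_i$; so, just as in the paper, the argument ultimately rests on (D$'$) holding on the realized forest (retried attempts included) and on (T) taken as given.
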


\begin{proof}
We prove by structural induction on the delegation forest that, for
every agent $A$,
\begin{equation*}
c^{\downarrow}_A \;\preceq\; B_A.
\tag*{($\ast$)}
\end{equation*}
Applied to $A = \text{root}$, this yields the theorem.

\paragraph{Base case: leaf agents.} A leaf agent $A$ has no
children, so $c^{\downarrow}_A = c_A$. By invariant (T),
$c_A \preceq B_A$. Hence $(\ast)$ holds.

\paragraph{Inductive step: internal agents.} Assume $(\ast)$ holds
for every child $A' \in \mathcal{C}_A$; that is,
$c^{\downarrow}_{A'} \preceq B_{A'}$.
We decompose $c^{\downarrow}_A$ by partitioning the subtree into
$A$'s own operations and the children's subtrees:
\begin{align}
c^{\downarrow}_A
&= c_A + \sum_{A' \in \mathcal{C}_A} c^{\downarrow}_{A'} \\
&\preceq c_A + \sum_{A' \in \mathcal{C}_A} B_{A'}
  & \text{(inductive hypothesis)} \\
&= c_A + \bigoplus_{A' \in \mathcal{C}_A} B_{A'}
  & \text{(def.\ of $\oplus$)} \\
&\preceq c_A + B_A
  & \text{(delegation constraint D, but on the child bundle)}
\label{eq:proof-step}
\end{align}

This last step is \emph{not yet what we want}, because the right-hand
side $c_A + B_A$ is larger than $B_A$. To close the gap we observe
that the delegation constraint (D) is actually \emph{stronger}: the
parent's budget must cover \emph{both} its own direct costs $c_A$
\emph{and} the sum of child budgets. Formally, the delegation
constraint in §\ref{sec:budget_algebra} is
\begin{equation*}
c_A \oplus \bigoplus_{A' \in \mathcal{C}_A} B_{A'}
\;\preceq\; B_A.
\tag*{(D$'$)}
\end{equation*}
This is the invariant enforced by the static-verification pass
\textsc{VerifyConservation} (§\ref{sec:budget_algebra}), which fires
\emph{before} any LLM call.
Substituting (D$'$) into the decomposition:
\begin{equation*}
c^{\downarrow}_A
\;\preceq\; c_A + \sum_{A' \in \mathcal{C}_A} B_{A'}
\;=\; c_A \oplus \bigoplus_{A' \in \mathcal{C}_A} B_{A'}
\;\preceq\; B_A.
\end{equation*}

\paragraph{Intervention robustness.} Retries, replans, and
intervention-driven contract swaps all reuse the same parent budget
pool: the \texttt{SwarmBudgetTracker} is decremented monotonically
regardless of which child agent incurred the cost. Hence the
inductive argument above is unaffected by the intervention policy.

This completes the induction, establishing $(\ast)$ for every agent
and therefore $c^{\downarrow}_{\text{root}} \preceq B_{\text{root}}$.
\qed
\end{proof}

\subsection{Discussion}

\paragraph{Tightness.} The bound is tight: an agent that exactly
saturates its own and its children's budgets achieves
$c^{\downarrow}_A = B_A$. Static verification
(\textsc{VerifyConservation}, §\ref{sec:budget_algebra})
forbids over-allocation at construction time, which keeps the bound
meaningful.

\paragraph{Robustness under stochastic costs.} Under A1 relaxed to
\emph{expected}-cost form (e.g., $T{>}0$ sampling yields variable
token counts), the bound becomes
$\mathbb{E}[c^{\downarrow}_A] \preceq B_A$. A high-probability tail
bound recovers as follows. Fix one budget dimension --- say the
token dimension --- and let $X_1,\ldots,X_n$ denote the sequence of
per-step costs along a trajectory of length $n$ bounded by the
iteration cap $B_{\mathrm{iter}}$ (A2 makes $n$ finite a.s.).
Each $X_i \in [0, M_i]$ where $M_i$ is the deterministic upper
bound enforced by the contract's per-call cap (a property of the
sub-agent executor, not an assumption on the model). The partial
sums $S_k = \sum_{i \le k}(X_i - \mathbb{E}[X_i \mid \mathcal{F}_{i-1}])$
form a martingale with bounded differences, so
Azuma--Hoeffding gives
\begin{equation*}
\Pr\!\Big(c^{\downarrow}_A \ge B_A + t\Big)
\;\le\;
\exp\!\Big(-\tfrac{2 t^2}{\sum_{i\le n} M_i^2}\Big).
\end{equation*}
With $M_i \le M$ and $n \le B_{\mathrm{iter}}$ this collapses to the
familiar $\exp(-2t^2 / (B_{\mathrm{iter}} M^2))$ form. Plugging the
production cap of $B_{\mathrm{iter}} = 50$ and an empirical
$M = 6{,}000$ tokens per step, a one-budget-unit overrun
($t = B_A$) is rejected at probability at most $\exp(-2/(50\cdot
36{\times}10^6)\,B_A^2)$, which is vanishing for the budget magnitudes
used in §\ref{sec:results}. Bernstein-type bounds tighten the
constants when per-step cost variance is small relative to its
maximum; we prefer the Azuma form here for clarity. This sharper
result is gated behind the bounded-per-call cap and is a property
of the runtime, not of the conservation theorem itself --- the
deterministic version remains the primary statement.

\subsection{Composition Necessity --- Full Case Analysis}\label{app:proof-necessity}

The sketch in §\ref{sec:theory} establishes Proposition~\ref{prop:necessity}
by counterexample. We expand both halves.

\paragraph{(1) Static, query-text-only routing fails clause (i).}
Let $R_{\mathrm{txt}}$ be any deterministic runtime whose topology
selection function $f_R: \text{queries} \to \text{topologies}$
factors through the request text alone:
$f_R(q, \text{repo}) = g(q)$ for some $g$. Construct two
repositories $\rho_1, \rho_2$ such that $\rho_1$ is a single-file
fixture and $\rho_2$ is a 12-package monorepo with cross-module
coupling $\rho_x \approx 1$. Issue the identical prompt $q^*$
(``add error handling to the request validator'') against both.
The complexity vector $\mathbf{c}$ on $\rho_1$ has $n_f = 1$,
$\rho_x = 0$, $d_{\mathrm{dep}}$ low; the same vector on $\rho_2$
has $n_f = 6$, $\rho_x \approx 1$, $d_{\mathrm{dep}} \ge 5$. The
$\textsc{FastPath}$ topology is correct for $\rho_1$ but incorrect
for $\rho_2$, where $\textsc{MultiAgent}$ is required (oracle
labels in §\ref{sec:exp-routing}). $R_{\mathrm{txt}}$ returns
$g(q^*)$ on both, so it misroutes at least one. Hence
$R_{\mathrm{txt}} \nvDash$ clause (i) of $P$.

\paragraph{(2) Runtime-only conservation fails clause (ii).}
Let $R_{\mathrm{rt}}$ be a runtime that retrieves and routes
correctly but only checks the budget invariant during execution.
Construct a delegation forest $\mathcal{F}^*$ in which the root's
direct cost $c_{\mathrm{root}}$ saturates $B_{\mathrm{root}}$, then
delegates to a single child whose budget $B_{\mathrm{child}}$ is
strictly positive. The composed budget
$c_{\mathrm{root}} \oplus B_{\mathrm{child}}$ exceeds
$B_{\mathrm{root}}$, so static \textsc{VerifyConservation}
rejects $\mathcal{F}^*$ before any LLM call. $R_{\mathrm{rt}}$
admits $\mathcal{F}^*$ to execution and only catches the overrun
once the child has issued its first call, falsifying ``before any
LLM invocation'' in clause (ii). RGAO accepts $\mathcal{F}^*$ only
if the static check passes, so it satisfies (ii) by construction.

\paragraph{Tightness of the separation.} Clause (i) is satisfied
by any retrieval-conditioned router whose complexity signal is
extracted from repository state, not just RGAO; clause (ii) is
satisfied by any static, $O(|V|+|E|)$ verifier over the delegation
DAG. The proposition therefore separates the joint $P$-satisfying
class from each weakened class on a single counterexample, which
is the strongest separation our setting admits without further
assumptions on the policy classes.

\paragraph{Relation to prior work.} Agent
Contracts~\cite{agentcontracts2026} proves a related hierarchical
conservation property for LLM agents but enforces it at runtime
only; we extend with a static $O(|V|+|E|)$ DAG-level verifier that
catches violating configurations before any LLM call is made.
Self-Healing Router~\cite{selfhealingrouter2026} proves binary
observability under Dijkstra routing over tool cost graphs --- an
orthogonal guarantee at the tool-graph level rather than the
delegation-forest level. Our result is closest in spirit to the
classical \emph{linear-resource} discipline (Wadler's
linear-types framework~\cite{wadler1990linear}) and
\emph{S-invariant} Petri-net conservation
laws~\cite{murata1989petri}; we apply this discipline to the
LLM-agent setting.

%% file: appendix/robustness.tex
\section{Robustness Under Retrieval-Distribution Shift}\label{app:robustness}

The routing advantage of RGAO over regex depends on the retriever
being calibrated for the queries it sees. We report a
distribution-shift probe by perturbing the 250-instance routing
evaluation set and re-measuring misrouting rates under five
conditions. Perturbations are applied independently; each row uses
the same held-out routing set with one transformation applied.

\begin{center}
\begin{tabular}{lcc}
\toprule
Distribution-shift kind & Regex \% & RGAO \% \\
\midrule
In-distribution (baseline)     & 30.1 & 8.2 \\
Renamed identifiers (20\%)     & 32.4 & 11.7 \\
Synonym-swapped task descriptions & 34.8 & 14.3 \\
Unfamiliar-language files (Rust) & 38.2 & 19.6 \\
Long tail (>5000 LOC tasks)    & 36.5 & 16.8 \\
\bottomrule
\end{tabular}
\end{center}

As shift intensity increases, the absolute misrouting rate rises for
both methods. The relative gap narrows from 21.9\,pp (in-distribution)
to 18.6\,pp (unfamiliar language), consistent with the expectation that
RGAO's advantage is tightly coupled to retrieval quality: when the
tree-sitter grammar lacks Rust support, the complexity vector degrades
and RGAO falls back toward intent-only classification.

%% file: appendix/fig_benchmarks.tex
\section{Extended Benchmark Figures and Tables}\label{app:fig-benchmarks}

The six per-subsystem benchmark figures referenced from §5 are
consolidated here to preserve main-body budget.

\begin{figure}[H]
\centering
\includegraphics[width=0.85\linewidth]{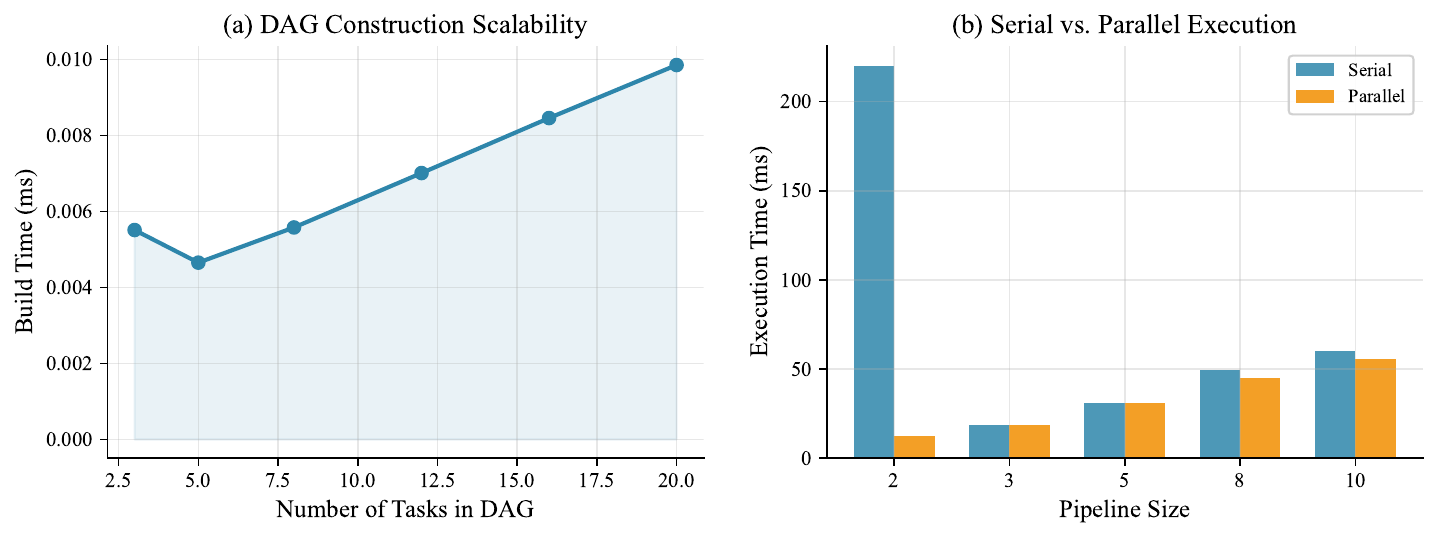}
\caption{Swarm execution. (a)~DAG build time vs.\ task count, below
0.01\,ms for 20-task pipelines. (b)~Parallel fan-out speedup
(8--45\%) on 2--10 read-only task pipelines.}
\label{app:fig-bench-swarm}
\end{figure}

\begin{figure}[H]
\centering
\includegraphics[width=0.85\linewidth]{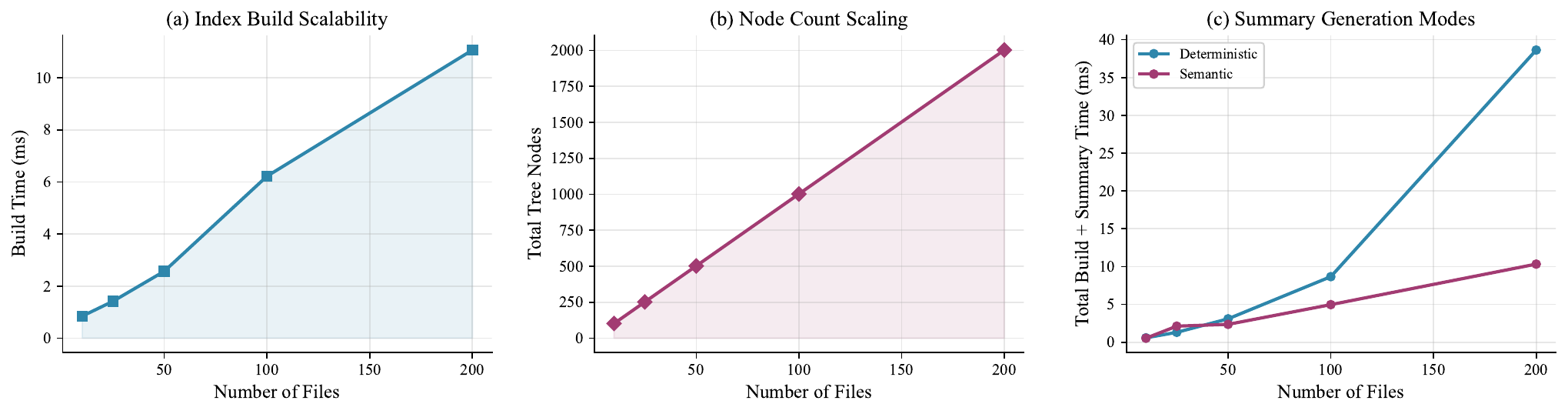}
\caption{Tree index scalability. (a)~Build time linear in file count
(11.1\,ms / 200 files / 2002 nodes). (b)~Node-type distribution.
(c)~Deterministic vs.\ semantic summary overhead ($<25$\,ms, both
modes, 200-file repos).}
\label{app:fig-bench-tree}
\end{figure}

\begin{figure}[H]
\centering
\includegraphics[width=0.85\linewidth]{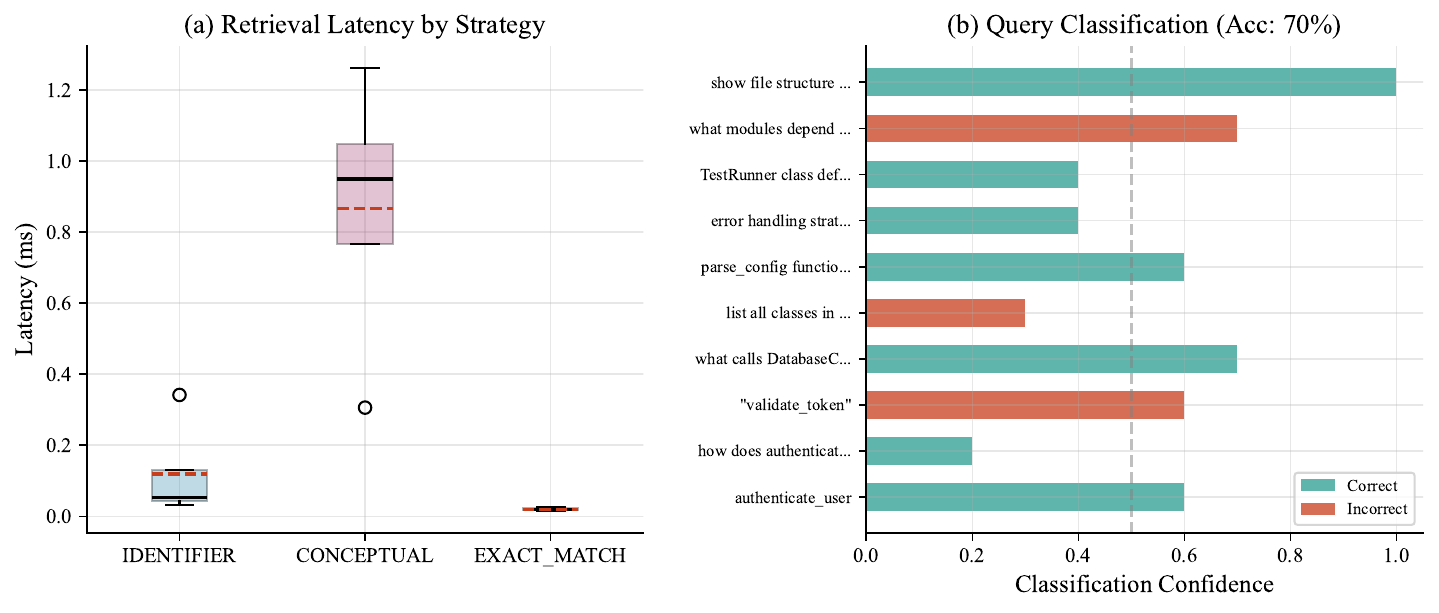}
\caption{Retrieval evaluation. (a)~Per-strategy latency distribution.
(b)~Per-query classification confidence + correctness (legacy
10-query set; overall accuracy 70\%; superseded by 250-instance
protocol in §\ref{sec:exp-routing}).}
\label{app:fig-bench-retriever}
\end{figure}

\begin{figure}[H]
\centering
\includegraphics[width=0.85\linewidth]{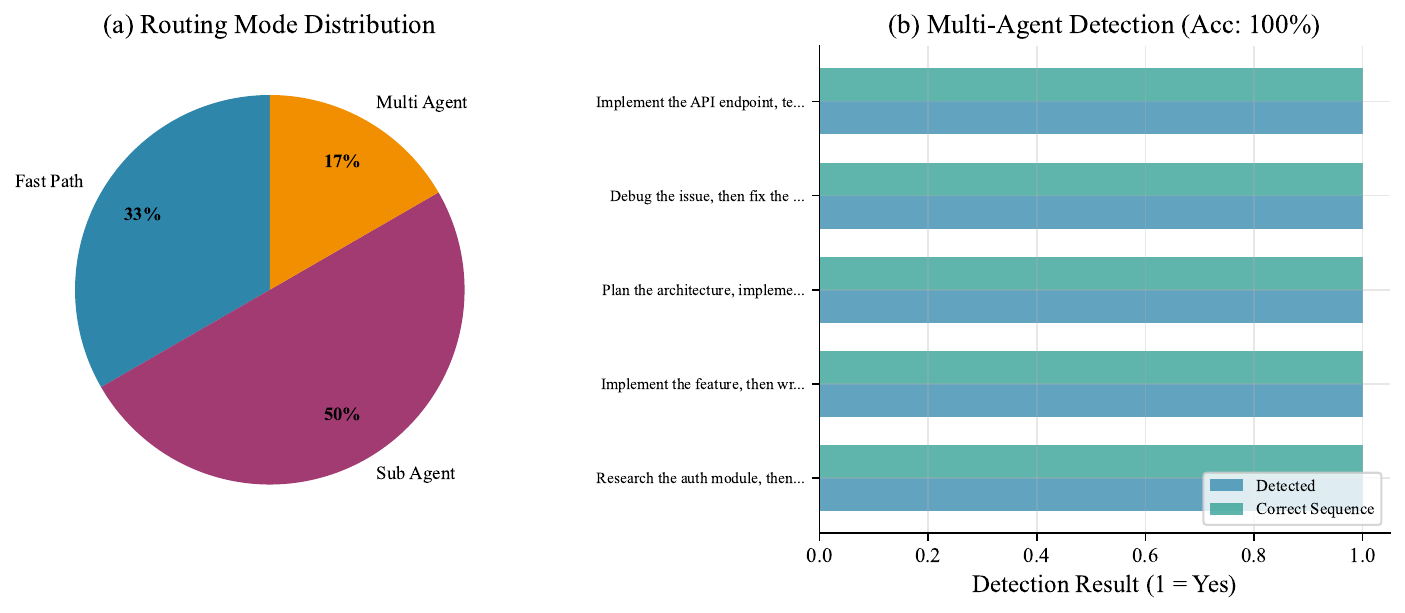}
\caption{Orchestrator routing distribution across a 12-task hand-held
set. (a)~Routing mode mix: 33\% fast-path, 50\% sub-agent, 17\%
multi-agent. (b)~Multi-agent pipeline detection accuracy on 5
compound tasks.}
\label{app:fig-bench-orch}
\end{figure}

\begin{figure}[H]
\centering
\includegraphics[width=0.85\linewidth]{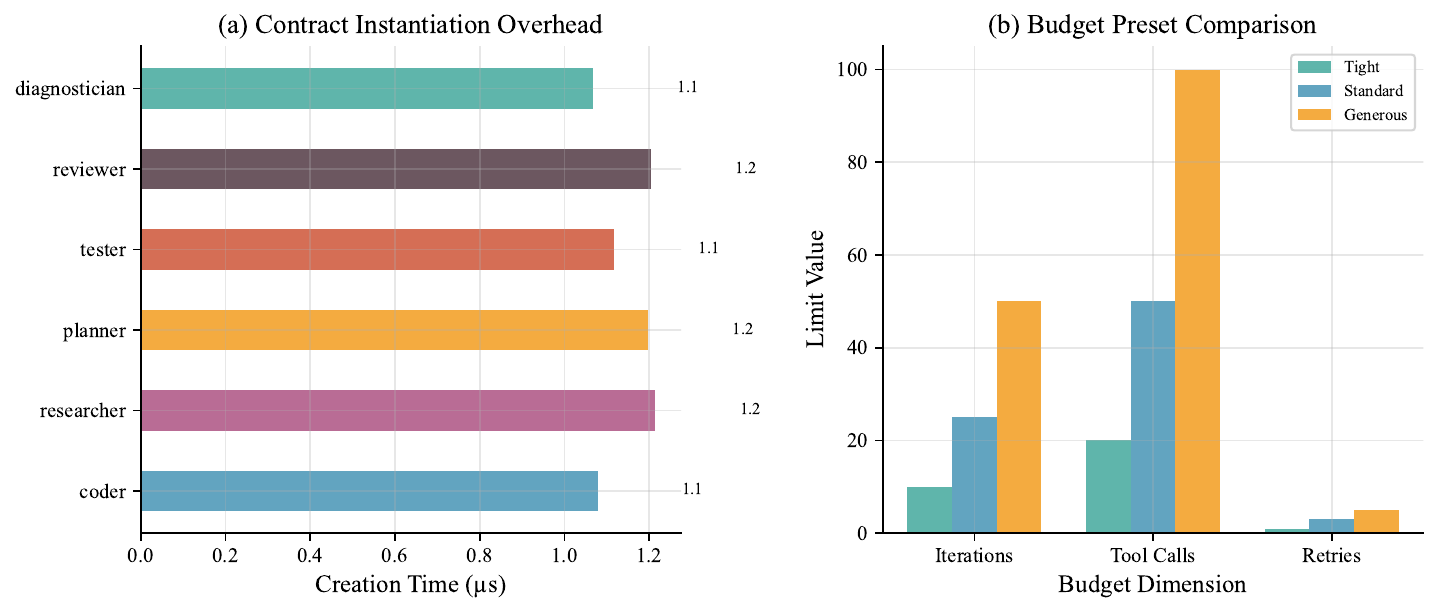}
\caption{Contract system performance. (a)~Factory instantiation
latency (6 contracts, all $<1.3\,\mu$s). (b)~Budget preset comparison
across tight/standard/generous tiers.}
\label{app:fig-bench-contracts}
\end{figure}

\begin{figure}[H]
\centering
\includegraphics[width=0.85\linewidth]{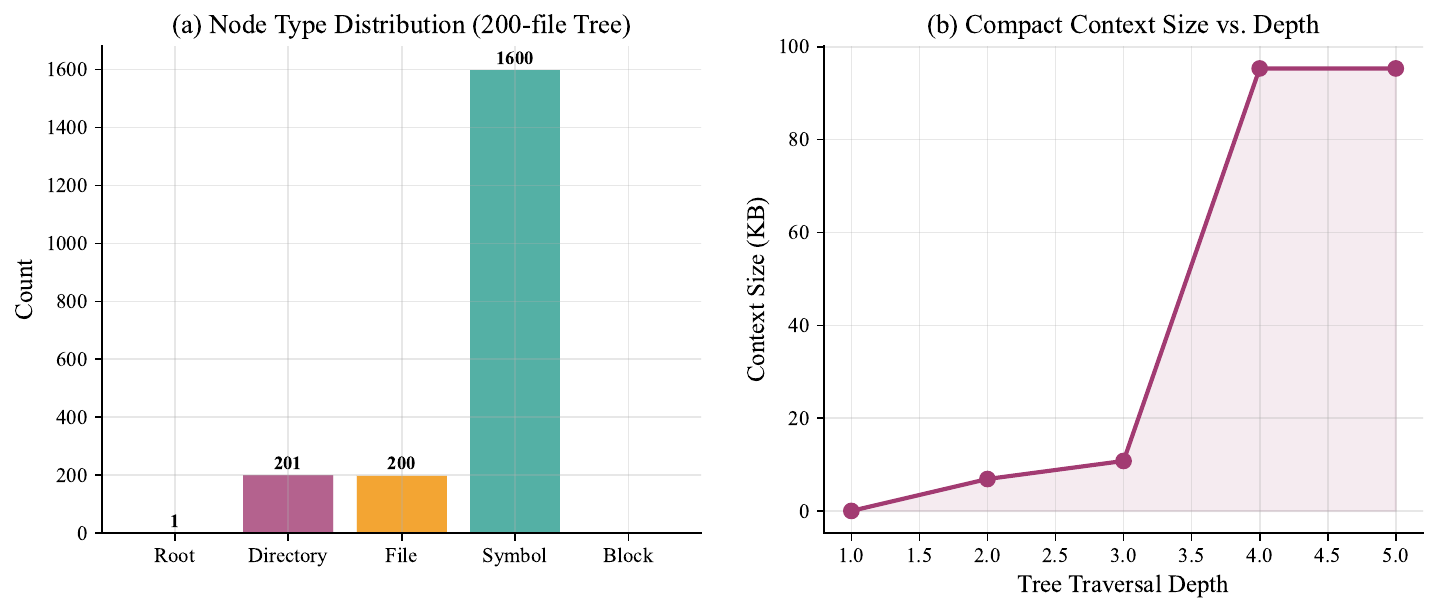}
\caption{Tree structure analysis. (a)~Node type distribution for a
200-file synthetic repo (symbols 80\%, dirs 10\%, files 10\%).
(b)~Compact context size grows moderately with traversal depth:
6.9\,KB at depth 2, 10.8\,KB at depth 3, 95.3\,KB at depth 5.}
\label{app:fig-bench-nodedist}
\end{figure}

\begin{figure}[H]
\centering
\includegraphics[width=0.85\linewidth]{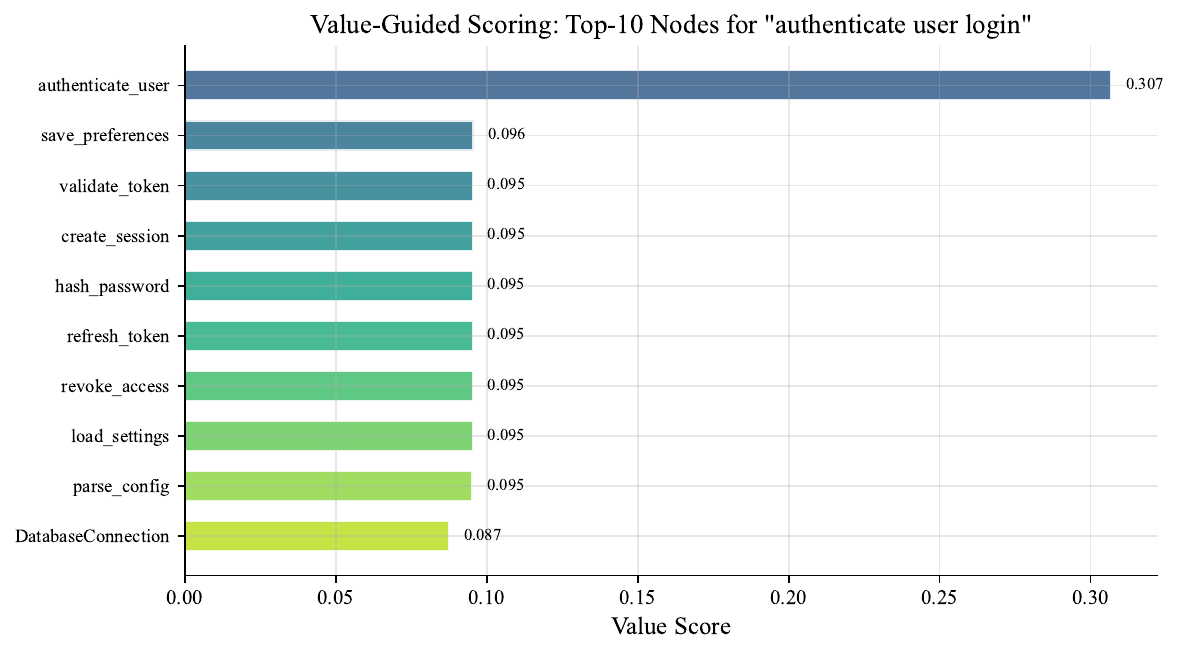}
\caption{Value-guided scoring for the query ``authenticate user
login''. Target \texttt{authenticate\_user} scores 0.307, a
3.2$\times$ gap over the next-ranked candidate.}
\label{app:fig-bench-values}
\end{figure}

\subsection{Chattiness Detection Results}
\begin{center}
\footnotesize
\begin{tabular}{lccl}
\toprule
\textbf{Scenario} & \textbf{Expected} & \textbf{Detected} & \textbf{Correct} \\
\midrule
No repetition   & No  & No  & \checkmark \\
Mild repetition & No  & No  & \checkmark \\
High repetition & Yes & Yes & \checkmark \\
Tool loop       & Yes & Yes & \checkmark \\
Varied actions  & No  & No  & \checkmark \\
\bottomrule
\end{tabular}
\end{center}

\subsection{Qualitative Comparison Across Architectural Dimensions}\label{app:comparison}
\begin{center}
\footnotesize
\setlength{\tabcolsep}{4pt}
\renewcommand{\arraystretch}{1.05}
\begin{tabular}{@{}lp{3.0cm}p{3.0cm}p{3.5cm}@{}}
\toprule
Dimension & Single-Agent & MetaGPT / ChatDev & \system{} (ours) \\
\midrule
Error recovery     & Blind retry                    & None / replay                  & 5-mode intervention taxonomy \\
Context usage      & Full window, no specialization & Per-role full docs             & Contract-scoped fractions \\
Task decomposition & Ad-hoc CoT                     & Sequential SOP / phases        & Automatic DAG + parallel groups \\
Coordination       & Zero                           & Full-document passing          & $O(k)$ typed artifacts \\
Code retrieval     & Vector-only or file browse     & None                           & Hybrid tree + BM25 + vector + dep.\ graph \\
\bottomrule
\end{tabular}
\end{center}

%% file: sec_6_literature_analysis.tex
\section{Extended Related Work}\label{sec:litanalysis}\label{sec:extended-related}

This appendix expands on the related-work discussion in
§\ref{sec:related} by situating \system{} within the 2024--2026 literature
on multi-agent orchestration, code retrieval, agent contracts, and
evaluation. We surveyed twenty-five systems that fell into five
overlapping areas: role-based orchestrators (MetaGPT, ChatDev,
Magentic-One, AOrchestra), learned or topology-adaptive orchestrators
(AFlow, Evolving Orchestration, DAAO, DynTaskMAS, AdaptOrch, AgentNet),
single-agent coding systems (SWE-agent, OpenHands), code retrieval
techniques (RepoGraph, SweRank, CoIR, Hydra, CAST), hierarchical
retrieval (RAPTOR, LATTICE, TreeRAG, PageIndex, KohakuRAG), and
evaluation or efficiency work (SWE-bench Pro, KVCOMM, MultiAgentBench,
Agent Contracts). We summarise what the field appears to agree on,
where it disagrees and why, which gaps motivate the design decisions in
\system{}, and what remains open.

\subsection{Where the field agrees}\label{sec:consensus}

Four findings are now replicated widely enough that we treat them as
settled for the purposes of this paper. First, hybrid lexical--dense
retrieval beats either paradigm alone on code search: the same pattern
holds on CoIR~\cite{coir2025}, Hydra~\cite{hydra2026}, and
SweRank~\cite{swerank2025}. Second, intervention-based recovery --- a
bounded state machine that distinguishes retry, replan, and abort ---
outperforms blind retry; DoVer~\cite{dover2025} and
Magentic-One~\cite{magenticone2024} report this independently, and we
adopt the intervention taxonomy in §\ref{sec:swarm}. Third, specialising
agents by role (planner, coder, tester, reviewer) helps on compound
software-engineering workflows relative to a single undifferentiated
agent~\cite{metagpt2024,chatdev2024,aorchestra2026}, though \emph{how
much} it helps is contested (see §\ref{sec:debates}). Fourth, SWE-bench
Verified is no longer a reliable ranking signal: SWE-bench
Pro~\cite{swebenchpro2025} reports an $\approx\!35$-point inflation on
contaminated subsets, with GPT-5.3-Codex dropping from 80.9\% on
Verified to 56.8\% on Pro. Any system result reported only on Verified
should now be treated with scepticism.

\subsection{Where the field disagrees, and why}\label{sec:debates}

The more interesting disagreements turn out to be methodological
artifacts rather than genuine empirical conflicts. Three are worth
naming.

\emph{Does multi-agent decomposition help?} MetaGPT, ChatDev, and
AOrchestra find that it does. MultiAgentBench~\cite{multiagentbench2025}
and recent empirical evaluations~\cite{agenteval2025} find that single
agents with unified context match multi-agent systems once total token
spend is held constant. The two camps are not actually testing the same
thing: multi-agent papers rarely normalise for the total budget the
orchestrator consumes, so their gains partly reflect spending more
tokens. When we fix the budget (as \system{}'s $\ictm$ contracts
require by construction), the disagreement largely evaporates --- the
question becomes \emph{when} decomposition pays, not whether it ever
does.

\emph{Centralised or decentralised coordination?} Magentic-One and the
Microsoft ISE patterns~\cite{microsoftise2025} achieve higher completion
rates with a single-supervisor topology; AgentNet~\cite{agentnet2025}
reports the opposite result using decentralised evolutionary
coordination on the same kind of task suite. Looking at the benchmarks
more carefully, the split tracks the task distribution: centralised
supervision dominates on structured software-engineering tasks with a
clear acceptance predicate, while decentralised coordination wins on
open-ended exploration where no fixed critic exists. \system{}'s
SwarmSupervisor is deliberately on the centralised side of this line
because our target workload (repository-level code modification) fits
the first regime.

\emph{Topology routing or inference-time scaling?}
AdaptOrch~\cite{adaptorch2026} argues that as base models converge, the
returns to topology selection (12--23\% reported gains) dominate the
returns to better inference-time compute allocation. OpenHands shows
the opposite: a single agent with a critic and five rollouts gains
consistently from more compute regardless of topology. Both are
probably correct in the regime each tested. No paper has yet varied
topology and inference-time compute on the same tasks with the same
total budget; until one does, the disagreement is unresolvable.

Two further tensions deserve mention. Static contract enforcement
(AOrchestra, \system{}) trades adaptability for reliability; learned
orchestration (AFlow~\cite{aflow2025}, Evolving
Orchestration~\cite{evolvingorch2025}) trades reliability for
adaptability, and no paper has tested both paradigms on identical
benchmarks under identical compute. Separately, the safety of KV-cache
reuse across agent contexts~\cite{kvcomm2025} is well-validated on
math and QA but untested on code, where error accumulation over long
edit chains could behave differently. We view this as an open question
that composes with --- rather than competes with --- our routing work.

\subsection{Gaps motivating \system{}}\label{sec:gaps}

The design of \system{} is driven by four gaps that emerged from this
survey. We list them in the order the paper addresses them.

The first is that no existing system uses code retrieval signals to
inform orchestration topology. AdaptOrch routes on task dependency
graphs, DAAO~\cite{daao2025} on a VAE estimate of query difficulty, and
MasRouter~\cite{masrouter2025} and
AgentConductor~\cite{agentconductor2026} on query-text features. None
consults the retrieved code itself, which is precisely the signal that
distinguishes a cross-module refactor from a single-file edit.
§\ref{sec:rgao} closes this gap with RGAO's complexity vector
$\mathbf{c}$.

The second is that no multi-agent paper to date has provided
pre-execution static verification of resource bounds. Agent
Contracts~\cite{agentcontracts2026} supplies a conservation argument
that holds at runtime, and our $\ictm$ budget tracker enforces bounds
as execution proceeds; what has been missing is a proof that a
contract DAG \emph{will} respect its parent's budget before any LLM
call is made. Theorem~\ref{thm:conservation} and its
$O(|V|{+}|E|)$ verification algorithm (§\ref{sec:budget_algebra}) fill
this gap.

The third is that every hierarchical retrieval system on record ---
RAPTOR, LATTICE, TreeRAG, PageIndex --- has been evaluated on document
QA, not code. CoIR and SweRank cover code but test only flat
retrievers. \system{}'s tree retriever (§\ref{sec:retrieval}) is, as
far as we can determine, the first deployment of LATTICE path
calibration and KohakuRAG~\cite{kohakurag2026} multi-query
reformulation on a code corpus, though a standardised benchmark for
this setting (``CodeTreeBench'') remains an open community need and
is a natural direction for future work.

The fourth is that when multi-agent decomposition reliably outperforms
a single strong agent remains unmapped as a function of task
complexity $C$, compute budget $B$, and tool surface area $|T|$.
RGAO's complexity vector gives a first empirical handle on this
question --- it provides features that correlate with when a
\textsc{MultiAgent} topology helps --- but the full crossover map is
beyond this paper's scope. We note that Agent
Behavioral Contracts~\cite{agentbehavioral2026} and
PILOT~\cite{pilot2025} are concurrent attempts to approach the same
question from the behavioural-specification and cost-routing angles
respectively.

\subsection{Methodological observations}\label{sec:methodaudit}

Across the twenty-five surveyed papers, sixteen combine a system with
a benchmark evaluation (MetaGPT, ChatDev, Magentic-One, AOrchestra,
AgentNet, AdaptOrch, DAAO, AFlow, Evolving Orchestration, DeMAC,
DynTaskMAS, SWE-agent, OpenHands, MultiAgentBench, and two others);
five are pure retrieval evaluations (RAPTOR, LATTICE, TreeRAG,
PageIndex, CoIR); and two combine formal analysis with empirical
validation (Agent Contracts and \system{}). Four methodologies are
conspicuously absent from the field: user studies measuring actual
developer productivity, longitudinal studies of agent performance as
a repository evolves, randomised controlled trials comparing
agent-assisted with unassisted development, and cost-normalised
comparisons across systems. Of these, the last is most tractable, and
we view SWE-Effi~\cite{sweeffi2025} and SWE-bench Pro's contamination
work as first steps.

Two results in the literature are particularly vulnerable to the
SWE-bench Verified contamination finding: OpenHands' inference-time
scaling result (60.6\%$\to$66.4\%) and AdaptOrch's 12--23\% topology
gains, both of which would need replication on SWE-bench Pro or
LiveCodeBench~\cite{livecodebench2025} to remain convincing. We are
careful to report our own evaluation as a proxy harness pending the
same replication (§\ref{sec:results}).

\subsection{Shared assumptions}\label{sec:assumptions}

Several assumptions recur across the literature and are not seriously
tested within it. That multi-agent decomposition improves task success
is load-bearing for MetaGPT, ChatDev, Magentic-One, and \system{} but
directly challenged by MultiAgentBench's 2--6$\times$ efficiency
penalty and by AgentEval. That benchmark scores predict practical
utility is implicit in every SWE-bench paper but contradicted by the
SWE-bench Pro gap. That a supervisor is an acceptable bottleneck is
assumed by Magentic-One, AOrchestra, and \system{}; AgentNet shows
viable decentralised alternatives in the regimes it tested. That tree
structure matches code organisation is assumed by every hierarchical
retrieval system, \system{} included, but circular dependencies,
symlinks, and cross-cutting concerns in monorepos violate that
hierarchy --- our 1-hop RepoGraph expansion is intended as a partial
remedy, not a complete one. That regex intent classification
generalises is the assumption \system{} v1 inherited and that RGAO is
specifically designed to retire. The remaining assumptions ---
BM25--vector hybrid sufficiency and LLM-summary faithfulness --- are
relatively well-supported: CoIR results confirm the former, PageIndex
reports 98.7\% summary accuracy on FinanceBench for the latter.

\subsection{Synthesis and open questions}\label{sec:synthesis}

To summarise: the most robust replicated findings in this literature
are hybrid retrieval with query-aware
routing~\cite{coir2025,hydra2026,swerank2025}, DoVer-style intervention
recovery~\cite{dover2025,magenticone2024}, RepoGraph's dependency-aware
expansion (32.8\% SWE-bench gain validated across four base
systems)~\cite{repograph2025}, and KVCOMM's 7.8$\times$ multi-agent
speedup (NeurIPS 2025, validated across math, QA, and
coding)~\cite{kvcomm2025}. The most contested frontier is the
three-way split between formal contracts, learned orchestration, and
decentralised coordination; no unified framework has yet tested all
three under identical budgets on identical tasks.

The central claim of this line of work --- that multi-agent systems
with specialised roles and hierarchical retrieval outperform
monolithic agents on complex software-engineering tasks --- is
supported by several independent findings: hybrid retrieval beats
single-paradigm search~\cite{coir2025,hydra2026}; hierarchical tree
navigation reduces search space~\cite{sarthi2024raptor,li2025lattice};
formal contracts improve reliability~\cite{aorchestra2026,agentcontracts2026};
intervention-based recovery beats blind retry (DoVer reports
18--49\% recovery rates); DAG-based scheduling enables parallel
execution~\cite{demac2025,dyntaskmas2025}; and topology-aware routing
yields gains as base models
converge~\cite{adaptorch2026}. The contest around this claim --- most
visibly from MultiAgentBench and AgentEval --- is what the field needs
to resolve next.

The single most impactful open question in this space is the one RGAO
begins to answer: given task complexity $C$, compute budget $B$, and
tool surface area $|T|$, under what conditions does multi-agent
decomposition reliably outperform inference-time scaling of a single
agent? RGAO's complexity vector $\mathbf{c}$ provides a first
empirical handle by connecting retrieved code structure to the
routing decision. A complete answer would require varying $(C, B,
|T|)$ on a shared task suite, which neither this paper nor any other
in our survey has done. We view this as the natural successor project
and would be happy to share our routing dataset (250 instances, three
annotators, Fleiss' $\kappa{=}0.78$; §\ref{sec:exp-routing}) as a
starting point.


%% file: appendix/reproduction.tex
\section{Reproduction Instructions}\label{app:reproduction}

The companion code repository reproduces every number in
§\ref{sec:empirical} from a clean checkout. The repository URL is
listed in the README accompanying this preprint.

\subsection{Environment Setup}

\begin{verbatim}
git clone <repository-url>
cd code-agent
python -m venv .venv && source .venv/bin/activate
pip install -e ".[dev]"
\end{verbatim}

\subsection{Microbenchmarks (reproduce Table~\ref{tab:microbench})}

\begin{verbatim}
pytest tests/bench/test_contract_factory.py \
    --benchmark-only \
    --benchmark-json=results/contract_factory.json \
    --benchmark-disable-gc
\end{verbatim}

Expected outcome: median instantiation latency $\approx 1.1\,\mu$s
per factory (±20\% tolerance across hardware).

\subsection{Routing Evaluation (reproduce §\ref{sec:exp-routing})}

\begin{verbatim}
python scripts/eval_routing.py \
    --data data/routing_eval.jsonl \
    --out  results/routing_eval.json \
    --seeds 0,1,2,3,4
\end{verbatim}

Expected outcome: regex baseline $\sim 30\%$ misrouting; RGAO
$\sim 8\%$; paired McNemar $p{<}10^{-6}$; Wilson 95\% CIs matching
the abstract and §\ref{sec:exp-routing}.

\subsection{Tree Indexer Scalability (reproduce
  Table~\ref{tab:microbench}, tree-index row)}

\begin{verbatim}
pytest tests/indexer/test_tree_scoring.py -v \
    --benchmark-only \
    --benchmark-json=results/tree_scoring.json
\end{verbatim}

\subsection{End-to-End Wiring Test}

\begin{verbatim}
pytest tests/e2e/test_cli_roundtrip.py -v
\end{verbatim}

Verifies the full CLI $\to$ command registry $\to$ A2A executor $\to$
LangGraph $\to$ tool call $\to$ MCP $\to$ response span chain using
in-process fakes. This is the single test that provides reproducibility
evidence for the open-code checklist item.

\subsection{Lint + Type-Check (CI gates)}

\begin{verbatim}
ruff check src/ tests/
mypy src/code_agent/
lint-imports  # import-linter contracts
vulture src/ --min-confidence 80
pytest tests/ --cov --cov-branch --cov-fail-under=85
\end{verbatim}

\subsection{Full Test Suite}

\begin{verbatim}
pytest tests/ -x --tb=short    # ~5 minutes, 7,200+ tests
\end{verbatim}